\newtheorem{theorem}{Theorem}[section]
\newtheorem{lemma}[theorem]{Lemma}
\newtheorem{proposition}[theorem]{Proposition}
\newtheorem{corollary}[theorem]{Corollary}
\newtheorem*{rare-eclipse-problem}{Rare Eclipse Problem}
\newtheorem*{gordons-comparison-theorem}{Gordon's Comparison Theorem}
\newtheorem*{gordons-escape-theorem}{Gordon's Escape Through a Mesh Theorem}
\begin{document}

\thispagestyle{empty}

\title{Compressive classification and the rare eclipse problem}

\author[Bandeira]{Afonso S.~Bandeira}
\address[Bandeira]{Program in Applied and Computational Mathematics (PACM), Princeton University, Princeton, NJ 08544, USA ({\tt ajsb@math.princeton.edu}).}

\author[Mixon]{Dustin G. Mixon}
\address[Mixon]{Department of Mathematics and Statistics, Air Force Institute of Technology, Wright-Patterson AFB OH, USA ({\tt dustin.mixon@afit.edu}).}

\author[Recht]{Benjamin Recht}
\address[Recht]{Department of Electrical Engineering and Computer Science, Department of Statistics, University of California, Berkeley CA, USA ({\tt brecht@eecs.berkeley.edu}).}

\begin{abstract}
This paper addresses the fundamental question of when convex sets remain disjoint after random projection.
We provide an analysis using ideas from high-dimensional convex geometry.
For ellipsoids, we provide a bound in terms of the distance between these ellipsoids and simple functions of their polynomial coefficients.
As an application, this theorem provides bounds for compressive classification of convex sets.
Rather than assuming that the data to be classified is sparse, our results show that the data can be acquired via very few measurements yet will remain linearly separable.
We demonstrate the feasibility of this approach in the context of hyperspectral imaging.
\end{abstract}

\maketitle

\section{Introduction}

A decade of powerful results in compressed sensing and related fields have demonstrated that many signals that have low-dimensional latent structure can be recovered from very few compressive measurements.
Building on this work, many researchers have shown that classification tasks can also be run on compressive measurements, provided that either the data or classifier is sparse in an appropriate basis~\cite{HauptEtal:06,DavenportEtal:07,DuarteEtal:07,BoufounosB:08,GuptaNR:10,PlanV:13}.
However, classification is a considerably simpler task than reconstruction, as there may be a large number of hyperplanes which successfully cleave the same data set.
The question remains:

\begin{quote}
Can we successfully classify data from even fewer compressive measurements than required for signal reconstruction?
\end{quote}

Prior work on compressive classification has focused on preserving distances or inner products between data points.
Indeed, since popular classifiers including the support vector machine and logistic regression only depend on dot products between data points, it makes sense that if dot products are preserved under a compressive measurement, then the resulting decision hyperplane should be close to the one computed on the uncompressed data.

In this paper, we take a different view of the compressive classification problem, and for some special cases, we are able to show that data can be classified with extremely few compressive measurements.
Specifically, we assume that our data classes are circumscribed by disjoint convex bodies, and we seek to avoid intersection between distinct classes after projection.
By studying the set of separating hyperplanes, we provide a general way to estimate the minimal dimension under which two bodies remain disjoint after random projection.
In Section~3, we specialize these results to study ellipsoidal classes and give our main theoretical result---that $k$ ellipsoids of sufficient pairwise separation remain separated after randomly projecting onto $O(\log k)$ dimensions.
Here, the geometry of the ellipsoids plays an interesting and intuitive role in the notion of sufficient separation.
Our results differ from prior work insofar as they can be applied to \textit{full} dimensional data sets and are independent of the number of points in each class.
We provide a comparison with principal component analysis in Section~4 by considering different toy examples of classes to illustrate strengths and weaknesses, and then by applying both approaches to hyperspectral imaging data.  We conclude in Section~5 with a discussion of future work.

\section{Our Model and Related Work}

In this section, we discuss our model for the classes as well as the underlying assumptions we apply throughout this paper.
Consider an ensemble of classes $C_i\subseteq\mathbb{R}^N$ that we would like to classify.
We assume that these classes are pairwise \textit{linearly separable}, that is, for every pair $i,j$ with $i\neq j$, there exists a hyperplane in $\mathbb{R}^N$ which separates $C_i$ and $C_j$.
Equivalently, we assume that the convex hulls $S_i:=\operatorname{hull}(C_i)$ are disjoint, and for simplicity, we assume these convex hulls are closed sets.

Linear separability is a particularly useful property in the context of classification, since to demonstrate non-membership, it suffices to threshold an inner product with the vector normal to a separating hyperplane.
Of course, in many applications, classes do not enjoy this (strong) property, but the property can be weakened to \textit{near} linear separability, in which there exists a hyperplane that mostly distinguishes of a pair of classes.
One may also lift to a tensored version of the vector space and find linear separability there.
Since linear separability is so useful, we use this property as the basis for our notion of distortion:
We seek to project the classes $\{C_i\}_{i=1}^k$ in such a way that their images are linearly separable.

Our assumptions on the $C_i$'s and our notion of distortion both lead to a rather natural problem in convex geometry (see Figure~\ref{figure.rare eclipse problem} for an illustration):

\begin{rare-eclipse-problem}
Given a pair of disjoint closed convex sets $A,B\subseteq\mathbb{R}^N$ and $\eta>0$, find the smallest $M$ such that a random $M\times N$ projection $P$ satisfies $PA\cap PB=\emptyset$ with probability $\geq1-\eta$.
\end{rare-eclipse-problem}

\begin{figure}
\begin{center}
\includegraphics[width=0.5\textwidth]{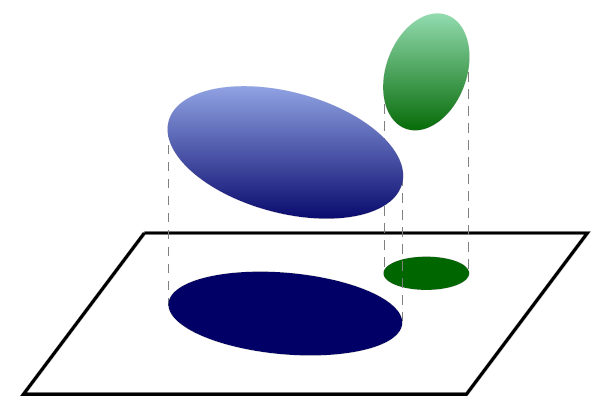}
\end{center}
\caption{
\label{figure.rare eclipse problem}
{\small 
Two sufficiently separated convex sets remain separated when projected onto a subspace.
The Rare Eclipse Problem asks for the smallest $M$ such that this happens when projecting onto a random subspace of dimension $M$.
Solving this problem for a given ensemble of classes enables dimensionality reduction in a way that ensures linear separability for classification.
}\normalsize}
\end{figure}

At this point, we discuss some related work in the community.
It appears that compressive classification was studied as early as 2006, when~\cite{HauptEtal:06} considered a model in which each class is a point in Euclidean space.
Interestingly, this bears some resemblance to the celebrated work in~\cite{IndykM:98,Kleinberg:97}, which used random projections to quickly approximate nearest-neighbor search.
The work in~\cite{DavenportEtal:07,DuarteEtal:07} considered a more exotic family of classes, namely low-dimensional manifolds---this is particularly applicable to the classification of images according to the primary object featured in each image.
Along these lines of low-dimensional classes, there has since been some work in the case where classes are low-dimensional subspaces~\cite{MajumdarW:10,ReboredoRCR:14}, or unions thereof~\cite{AryafarJS:12}.
Specifically, \cite{ReboredoRCR:14} considers a Gaussian mixture model in which each Gaussian is supported on a different subspace.  From a slightly dual view, researchers have also shown that if the classifier is known to be sparse, then we can subsample the data itself, and the separating hyperplane can be determined from a number of examples roughly proportional to the sparsity of the hyperplane~\cite{BoufounosB:08,GuptaNR:10,PlanV:13}.

It is striking that, to date, all of the work in compressive classification has focused on classes of low dimension.
This is perhaps an artifact of the mindset of compressed sensing, in which the projection preserves all information on coordinate planes of sufficiently small dimension.
However, classification should not require nearly as much information as signal reconstruction does, and so we expect to be able to compressively classify into classes of full dimension; indeed, we allow two points in a common class to be mapped to the same compressive measurement, as this will not affect the classification.
A Boolean version of this idea is studied in \cite{AbbeBCC:14}, which considers both random and optimality constructed projections.
In the continuous setting, the closest existing work is that of Dasgupta~\cite{Dasgupta:99,Dasgupta:00}, which uses random projections to learn a mixture of Gaussians.
In particular, Dasgupta shows that sufficiently separated Gaussians stay separated after random projection.
In the next section, we prove a similar result about ellipsoids, but with a sharper notion of separation.

\section{Theoretical Results}

Given two disjoint closed convex bodies $A,B\subseteq\mathbb{R}^N$ and a projection dimension $M$, the Rare Eclipse Problem asks whether a random $M\times N$ projection $P$ of these bodies avoids collision, i.e., whether $PA\cap PB$ is typically empty.
This can be recast as a condition on the $(N-M)$-dimensional null space of $P$:
\begin{equation*}
PA\cap PB=\emptyset
\qquad
\Longleftrightarrow
\qquad
\operatorname{Null}(P)\cap(A-B)=\emptyset,
\end{equation*}
where $A-B$ denotes the Minkowski difference of $A$ and $B$.
Of course, the null space of $P$ is closed under scalar multiplication, and so avoiding $A-B$ is equivalent to avoiding the normalized versions of the members of $A-B$.
Indeed, if we take $S$ to denote the intersection between the unit sphere in $\mathbb{R}^N$ and the cone generated by $A-B$, then
\begin{equation*}
PA\cap PB=\emptyset
\qquad
\Longleftrightarrow
\qquad
\operatorname{Null}(P)\cap S=\emptyset.
\end{equation*}
Now suppose $P$ is drawn so that its entries are iid $\mathcal{N}(0,1)$.
Then by rotational invariance, the distribution of its null space is uniform over the Grassmannian.
As such, the Rare Eclipse Problem reduces to a classical problem in convex geometry:
Given a ``mesh'' (a closed subset of the unit sphere), how small must $K$ be for a random $K$-dimensional subspace to ``escape through the mesh,'' i.e., to avoid collision?
It turns out that for this problem, the natural way to quantify the size of a mesh is according to its \textit{Gaussian width}:
\begin{equation*}
w(S)
:=\mathbb{E}_g\bigg[\sup_{z\in S}\langle z,g\rangle\bigg],
\end{equation*}
where $g$ is a random vector with iid $\mathcal{N}(0,1)$ entries.
Indeed, Gaussian width plays a crucial role in the following result, which is an improvement to the original (Corollary~3.4 in~\cite{Gordon:88}); the proof is given in the appendix, and follows the proof of Corollary~3.3 in~\cite{ChandrasekaranRP:12} almost identically.

\begin{gordons-escape-theorem}
Take a closed subset $S$ of the unit sphere in $\mathbb{R}^N$, and denote $\lambda_M:=\mathbb{E}\|g\|_2$, where $g$ is a random $M$-dimensional vector with iid $\mathcal{N}(0,1)$ entries.
If $w(S)<\lambda_M$, then an $(N-M)$-dimensional subspace $Y$ drawn uniformly from the Grassmannian satisfies
\begin{equation*}
\operatorname{Pr}\Big(Y\cap S=\emptyset\Big)
\geq 1-\exp\bigg(-\frac{1}{2}\Big(\lambda_M-w(S)\Big)^2\bigg).
\end{equation*}
\end{gordons-escape-theorem}

It is straightforward to verify that $\lambda_M\geq\sqrt{M-1}$, and so rearranging leads to the following corollary:

\begin{corollary}
\label{corollary.Gordon}
Take disjoint closed convex sets $A,B\subseteq\mathbb{R}^N$, and let $w_\cap$ denote the Gaussian width of the intersection between the unit sphere in $\mathbb{R}^N$ and the cone generated by the Minkowski difference $A-B$.
Draw an $M\times N$ matrix $P$ with iid $\mathcal{N}(0,1)$ entries.
Then
\begin{equation*}
\begin{array}{lcl}
M>\Big(w_\cap+\sqrt{2\log(1/\eta)}\Big)^2+1 &\Longrightarrow& \operatorname{Pr}\big(PA\cap PB=\emptyset\big)\geq 1-\eta.
\end{array}
\end{equation*}
\end{corollary}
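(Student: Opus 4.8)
The plan is to read the corollary directly off Gordon's Escape Through a Mesh Theorem, using the cone reformulation of the Rare Eclipse Problem already recorded above together with the stated inequality $\lambda_M\ge\sqrt{M-1}$; the only real content is converting the width hypothesis $w(S)<\lambda_M$ into the displayed lower bound on $M$. Let $S$ be the intersection of the unit sphere in $\mathbb{R}^N$ with the cone generated by $A-B$, so that $w(S)=w_\cap$, and recall the equivalences from the text:
\[
PA\cap PB=\emptyset\iff\operatorname{Null}(P)\cap(A-B)=\emptyset\iff\operatorname{Null}(P)\cap S=\emptyset,
\]
where the last step uses that $\operatorname{Null}(P)$ is a subspace (closed under scaling) and that $0\notin A-B$ because $A$ and $B$ are disjoint.

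Next I would supply the probabilistic input. Assuming $M\le N$ (otherwise $P$ is almost surely injective and $PA\cap PB=\emptyset$ holds trivially), a matrix $P$ with iid $\mathcal{N}(0,1)$ entries has rank $M$ almost surely, so $Y:=\operatorname{Null}(P)$ is almost surely $(N-M)$-dimensional, and by rotational invariance of the standard Gaussian its distribution is uniform over the Grassmannian. Hence $Y$ and $S$ satisfy the hypotheses of Gordon's Escape Through a Mesh Theorem provided $w_\cap<\lambda_M$. This follows from the assumed bound: $M>\big(w_\cap+\sqrt{2\log(1/\eta)}\big)^2+1$ gives $\lambda_M\ge\sqrt{M-1}>w_\cap+\sqrt{2\log(1/\eta)}$, hence $w_\cap<\lambda_M$ and $\lambda_M-w_\cap>\sqrt{2\log(1/\eta)}>0$. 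Plugging into Gordon's conclusion and using that $t\mapsto e^{-t^2/2}$ is decreasing on $[0,\infty)$,
\[
\operatorname{Pr}\big(PA\cap PB=\emptyset\big)=\operatorname{Pr}\big(Y\cap S=\emptyset\big)\ge1-\exp\!\Big(-\tfrac12(\lambda_M-w_\cap)^2\Big)>1-\exp\!\big(-\log(1/\eta)\big)=1-\eta.
\]

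I do not expect a substantive obstacle; the corollary is essentially a repackaging of Gordon's theorem after rearranging the width condition. The only care needed is bookkeeping: confirming that $S$ is a legitimate mesh for the theorem (and if $S$ happens not to be closed, replacing it by its closure $\overline{S}$, which has the same Gaussian width $w_\cap$ since $\langle\cdot,g\rangle$ is continuous and which contains $S$, so that $Y\cap\overline{S}=\emptyset$ still forces $Y\cap S=\emptyset$); checking $0\notin A-B$ so that normalization onto the sphere is valid; and confirming that $\operatorname{Null}(P)$ is genuinely $(N-M)$-dimensional and Grassmannian-uniform. All of these are routine.
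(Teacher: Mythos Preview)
Your proposal is correct and follows exactly the approach the paper intends: apply Gordon's Escape Theorem to the mesh $S$, use $\lambda_M\ge\sqrt{M-1}$, and rearrange the resulting bound. The paper only records this as ``rearranging leads to the following corollary,'' so your write-up is in fact more detailed than the paper's own, including the bookkeeping about closedness of $S$ and the distribution of $\operatorname{Null}(P)$.
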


Now that we have a sufficient condition on $M$, it is natural to wonder how tight this condition is.
Recent work by Amelunxen, Lotz, McCoy and Tropp~\cite{AmelunxenLMT:13} shows that the Gordon's results are incredibly tight.
Indeed, by an immediate application Theorem~I and Proposition~10.1 in~\cite{AmelunxenLMT:13}, we achieve the following characterization of a phase transition for the Rare Eclipse Problem:

\begin{corollary}
\label{corollary.AKF}
Take disjoint closed convex sets $A,B\subseteq\mathbb{R}^N$, and let $w_\cap$ denote the Gaussian width of the intersection between the unit sphere in $\mathbb{R}^N$ and the cone generated by the Minkowski difference $A-B$.
Draw an $M\times N$ matrix $P$ with iid $\mathcal{N}(0,1)$ entries.
Then
\begin{equation*}
\begin{array}{lcl}
M\geq w_\cap^2+\sqrt{16N\log(4/\eta)}+1 &\Longrightarrow& \operatorname{Pr}\big(PA\cap PB=\emptyset\big)\geq 1-\eta,\\
M\leq w_\cap^2-\sqrt{16N\log(4/\eta)}   &\Longrightarrow& \operatorname{Pr}\big(PA\cap PB=\emptyset\big)\leq \eta,
\end{array}
\end{equation*}
\end{corollary}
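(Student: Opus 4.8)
The plan is to read this corollary directly off the ``approximate kinematic formula'' of Amelunxen, Lotz, McCoy and Tropp. Recall from the reduction described above that, for $P$ with iid $\mathcal{N}(0,1)$ entries, the event $PA\cap PB=\emptyset$ is equivalent to $\operatorname{Null}(P)\cap\operatorname{cone}(A-B)=\{0\}$, and that $\operatorname{Null}(P)$ is distributed uniformly over the Grassmannian of $(N-M)$-dimensional subspaces. Writing $C:=\overline{\operatorname{cone}(A-B)}$ for the closed cone generated by the Minkowski difference, and fixing an arbitrary $(N-M)$-dimensional subspace $L$, the subspace $\operatorname{Null}(P)$ has the same law as $QL$ with $Q$ Haar-distributed on the orthogonal group $O(N)$. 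Thus the Rare Eclipse Problem is exactly the question of whether the randomly rotated cone $QL$ meets $C$ only at the origin, which is controlled by Theorem~I of~\cite{AmelunxenLMT:13}.

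To apply that theorem I need two pieces of the statistical-dimension calculus. First, the statistical dimension of a $d$-dimensional subspace equals $d$ (the metric projection of a Gaussian vector onto a $d$-subspace is again a $d$-dimensional Gaussian vector), so $\delta(L)=N-M$. Second, Proposition~10.1 of~\cite{AmelunxenLMT:13} sandwiches the statistical dimension of $C$ between the Gaussian width of its spherical section and one more:
\begin{equation*}
w_\cap^2\le\delta(C)\le w_\cap^2+1,
\end{equation*}
which reflects the identities $w_\cap=\mathbb{E}\|\Pi_C(g)\|_2$, $\delta(C)=\mathbb{E}\|\Pi_C(g)\|_2^2$, together with Jensen's inequality and the fact that $g\mapsto\|\Pi_C(g)\|_2$ is $1$-Lipschitz (so its variance is at most $1$). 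Now Theorem~I of~\cite{AmelunxenLMT:13}, with $a_\eta:=\sqrt{8\log(4/\eta)}$, states that $\delta(C)+\delta(L)\le N-a_\eta\sqrt{N}$ forces $\operatorname{Pr}(C\cap QL\neq\{0\})\le\eta$, and that $\delta(C)+\delta(L)\ge N+a_\eta\sqrt{N}$ forces $\operatorname{Pr}(C\cap QL\neq\{0\})\ge 1-\eta$. Since $\delta(C)+\delta(L)=\delta(C)+N-M$, these hypotheses read $M\ge\delta(C)+a_\eta\sqrt{N}$ and $M\le\delta(C)-a_\eta\sqrt{N}$, respectively. Substituting $\delta(C)\le w_\cap^2+1$ into the first and $\delta(C)\ge w_\cap^2$ into the second, and noting $\sqrt{16N\log(4/\eta)}\ge a_\eta\sqrt{N}=\sqrt{8N\log(4/\eta)}$, delivers the two displayed implications, with a little room to spare in the stated constant.

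The one step that needs genuine care is the passage between $\operatorname{cone}(A-B)$ and its closure $C$. For the first implication this is free: $\operatorname{cone}(A-B)\subseteq C$, so $\operatorname{Pr}(PA\cap PB\neq\emptyset)\le\operatorname{Pr}(\operatorname{Null}(P)\cap C\neq\{0\})\le\eta$. For the second implication I must argue that a random $(N-M)$-subspace meeting $C$ nontrivially also meets $\operatorname{cone}(A-B)$ nontrivially off a null event; this is immediate when one of $A,B$ is bounded, since then $A-B$ is compact and, because disjointness gives $0\notin A-B$, the cone $\operatorname{cone}(A-B)$ is already closed, and in general it follows from the genericity of Grassmannian intersections (the relative boundary of $C$ is reached only through a non-generic tangency). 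I also record, to meet Theorem~I's hypothesis that one of the two cones is not a subspace, that $C$ is not a subspace: since $A$ and $B$ are disjoint convex sets they are separated by a hyperplane, so $\langle u,a-b\rangle\ge0$ for some $u\neq0$ and all $a\in A$, $b\in B$, which places $C$ inside a closed halfspace, and a supporting-hyperplane argument at $0\in\partial(\overline{A-B})$ keeps $C$ strictly inside a halfspace of its own linear span, so $C$ cannot be a subspace. I expect this closure bookkeeping to be the main obstacle; everything else is a direct substitution into~\cite{AmelunxenLMT:13}.
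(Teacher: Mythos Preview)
Your proposal is correct and follows exactly the route the paper takes: the paper's own justification is simply ``by an immediate application [of] Theorem~I and Proposition~10.1 in~\cite{AmelunxenLMT:13},'' and you have faithfully unpacked that citation, including the reduction to $\operatorname{Null}(P)\cap\operatorname{cone}(A-B)=\{0\}$, the identity $\delta(L)=N-M$, the sandwich $w_\cap^2\le\delta(C)\le w_\cap^2+1$, and the substitution into the approximate kinematic formula. Your additional bookkeeping on the closure of the cone and the ``not a subspace'' hypothesis is more careful than the paper itself, which treats the corollary as a one-line consequence.
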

Considering the second part of Corollary~\ref{corollary.AKF}, the bound in Corollary~\ref{corollary.Gordon} is essentially tight.
Also, since Corollary~\ref{corollary.AKF} features an additional $\sqrt{N}$ factor in the error term of the phase transition, the bound in Corollary~\ref{corollary.Gordon} is stronger than the first part of Corollary~\ref{corollary.AKF} when $w_\cap\ll\sqrt{N}-\sqrt{\log(1/\eta)}$, which corresponds to the regime where we can compress the most: $M\ll N$.

\subsection{The case of two balls}

Corollaries~\ref{corollary.Gordon} and~\ref{corollary.AKF} demonstrate the significance of Gaussian width to the Rare Eclipse Problem.
In this subsection, we observe these quantities to solve the Rare Eclipse Problem in the special case where $A$ and $B$ are balls.
Since each ball has its own parameters (namely, its center and radius), in this subsection, it is more convenient to write $A=S_1$ and $B=S_2$.
The following lemma completely characterizes the difference cone $S_1-S_2$:

\begin{lemma}
\label{lemma.circcone lower bound}
For $i=1,2$, take balls $S_i:=\{c_i+r_ix:x\in \mathcal{B}\}$, where $c_i\in\mathbb{R}^N$, $r_i>0$ such that $r_1+r_2<\|c_1-c_2\|$, and $\mathcal{B}$ denotes the ball centered at $0$ of radius $1$.
Then the cone generated by the Minkowski difference $S_1-S_2$ is the circular cone
\begin{equation*}
\operatorname{Circ}(\alpha):=\{z:\langle z,c_1-c_2\rangle\geq\|z\|\|c_1-c_2\|\cos\alpha\},
\end{equation*}
where $\alpha\in(0,\pi/2)$ is the angle such that $\sin\alpha=(r_1+r_2)/\|c_1-c_2\|$.
\end{lemma}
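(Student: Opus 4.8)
The plan is to reduce the statement to an elementary tangent-line computation: first simplify the Minkowski difference to a single ball, then verify the circular-cone identity by a direct two-sided inclusion. Throughout, write $d:=c_1-c_2$ and $\rho:=r_1+r_2$, so the hypothesis reads $\rho<\|d\|$ and (squaring the defining relation for $\alpha$) the target reads $\cos\alpha=\sqrt{\|d\|^2-\rho^2}/\|d\|$. \emph{Step 1 (the difference is a ball missing the origin).} Since $\mathcal{B}=-\mathcal{B}$ and the Minkowski sum of the centered balls $r_1\mathcal{B}$ and $r_2\mathcal{B}$ is the centered ball $(r_1+r_2)\mathcal{B}$, expanding $(c_1+r_1x)-(c_2+r_2y)$ gives $S_1-S_2=d+\rho\mathcal{B}$, the Euclidean ball of radius $\rho$ about $d$. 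The hypothesis forces $\|w\|\geq\|d\|-\rho>0$ for every $w\in S_1-S_2$, so $0$ is excluded; this also makes $\operatorname{cone}(d+\rho\mathcal{B}):=\{tw:t\geq0,\ w\in d+\rho\mathcal{B}\}$ a closed (and, as $d+\rho\mathcal{B}$ is convex, convex) cone, since any limit of points $t_nw_n$ forces $w_n$ to subconverge in the compact ball and then $t_n=\|t_nw_n\|/\|w_n\|$ to converge. It remains to prove $\operatorname{cone}(d+\rho\mathcal{B})=\operatorname{Circ}(\alpha)$.

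\emph{Step 2 ($\operatorname{cone}(d+\rho\mathcal{B})\subseteq\operatorname{Circ}(\alpha)$).} Because $\operatorname{Circ}(\alpha)$ is a cone, it suffices to place each $w=d+\rho z$ with $\|z\|\leq1$ into it, i.e.\ to show $\langle w,d\rangle\geq\|w\|\|d\|\cos\alpha$. The left-hand side equals $\|d\|^2+\rho\langle z,d\rangle\geq\|d\|(\|d\|-\rho)>0$, so both sides are nonnegative and one may compare squares; using $\|w\|^2=\|d\|^2+2\rho\langle z,d\rangle+\rho^2\|z\|^2$, the difference collapses to
\begin{equation*}
\langle w,d\rangle^{2}-\|w\|^{2}\|d\|^{2}\cos^{2}\alpha
=\rho^{2}\Big((\langle z,d\rangle+\rho)^{2}+(1-\|z\|^{2})(\|d\|^{2}-\rho^{2})\Big)\geq0,
\end{equation*}
where both bracketed terms are nonnegative since $\|z\|\leq1$ and $\rho<\|d\|$; equality holds exactly at the tangent points, where $\|z\|=1$ and $\langle z,d\rangle=-\rho$.

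\emph{Step 3 ($\operatorname{Circ}(\alpha)\subseteq\operatorname{cone}(d+\rho\mathcal{B})$).} The origin lies in both cones, so fix a nonzero $z\in\operatorname{Circ}(\alpha)$ and let $u:=z/\|z\|$; the defining inequality gives $\cos\theta\geq\cos\alpha$, where $\theta\in[0,\pi]$ is the angle between $u$ and $d$, hence $\theta\leq\alpha<\pi/2$. The foot of the perpendicular from $d$ to the line $\mathbb{R}u$ sits at parameter $\langle d,u\rangle=\|d\|\cos\theta>0$ and at distance $\|d\|\sin\theta\leq\|d\|\sin\alpha=\rho$ from $d$, so the line meets $d+\rho\mathcal{B}$; its near intersection is at $t_-=\|d\|\cos\theta-\sqrt{\rho^{2}-\|d\|^{2}\sin^{2}\theta}$, with $t_-\geq0$ because squaring reduces this to $\|d\|^{2}\geq\rho^{2}$, and $t_->0$ since the ball avoids $0$. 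Thus $t_-u\in d+\rho\mathcal{B}$ and $z=(\|z\|/t_-)(t_-u)\in\operatorname{cone}(d+\rho\mathcal{B})$, which closes the argument.

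I do not expect a serious obstacle. The one place to take care is the algebraic identity in Step 2 — one must not discard the $\|z\|\leq1$ slack too early, lest the estimate degrade to something false — and, relatedly, the sign of $t_-$ in Step 3; both ultimately rest on the separation hypothesis $\rho<\|d\|$. A coordinate-free variant of Step 3 is available through the law of sines in the triangle with vertices $0,d,tu$, but the line–ball intersection seems the most transparent route.
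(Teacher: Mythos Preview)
Your proof is correct. Both your argument and the paper's proceed by the same two-inclusion skeleton after reducing $S_1-S_2$ to the single ball $d+\rho\mathcal{B}$, but the mechanics differ. For $\operatorname{cone}(d+\rho\mathcal{B})\subseteq\operatorname{Circ}(\alpha)$, the paper sets up the constrained minimization of $\langle y+d,d\rangle/\|y+d\|\|d\|$ over $\|y\|\leq\rho$, pushes to the boundary, and applies Lagrange multipliers to extract the orthogonality relation $\langle y^\natural+d,y^\natural\rangle=0$ and hence the right-triangle picture; your Step~2 bypasses all of this with the algebraic identity
\[
\langle w,d\rangle^{2}-\|w\|^{2}\|d\|^{2}\cos^{2}\alpha=\rho^{2}\bigl((\langle z,d\rangle+\rho)^{2}+(1-\|z\|^{2})(\|d\|^{2}-\rho^{2})\bigr),
\]
which is shorter and simultaneously identifies the equality (tangent) locus. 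For the reverse inclusion, the paper selects a circle $G$ on the boundary of $\operatorname{Circ}(\alpha)$ at radius $\|y^\natural+d\|$ and uses a side-angle-side congruence to land each point of $G$ in $d+\rho\mathcal{B}$; your Step~3 instead drops a perpendicular from $d$ to the ray $\mathbb{R}_{\geq0}u$ and reads off the line--ball intersection parameter $t_-$. The two reverse arguments are essentially dual views of the same tangent-cone geometry, but yours avoids the auxiliary set $G$ and works directly with an arbitrary ray. Overall your route is more elementary---no calculus, just a squared-norm identity and a quadratic in $t$---while the paper's Lagrange computation makes the extremal geometry (the right angle at the tangent point) more explicit.
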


In three dimensions, the fact that the difference cone is circular makes intuitive sense.
The proof of Lemma~\ref{lemma.circcone lower bound} is routine and can be found in the appendix.

Considering the beginning on this section, it now suffices to bound the Gaussian width of the circular cone's intersection with the unit sphere $\mathbb{S}^{N-1}$.
Luckily, this computation is already available as Proposition~4.3 in~\cite{AmelunxenLMT:13}:
\begin{equation*}
\Big(w(\operatorname{Circ}(\alpha))\cap\mathbb{S}^{N-1}\Big)^2=N\sin^2\alpha+O(1).
\end{equation*}
See Figure~\ref{figure_circcone_phasetransition} for an illustration of the corresponding phase transition.
By Lemma~\ref{lemma.circcone lower bound} (and Corollaries~\ref{corollary.Gordon} and~\ref{corollary.AKF}), this means a random $M\times N$ projection will keep two balls from colliding provided
\begin{equation*}
M\geq N\bigg(\frac{r_1+r_2}{\|c_1-c_2\|}\bigg)^2+O(\sqrt{N}).
\end{equation*}
Note that there is a big payoff in the separation $\|c_1-c_2\|$ between the balls.
Indeed, doubling the separation decreases the required projection dimension by a factor of $4$.

\begin{figure}[t]
\begin{center}
\begin{tikzpicture}
\node[inner sep=0pt] (plot) at (3,3) {\includegraphics[width=6cm]{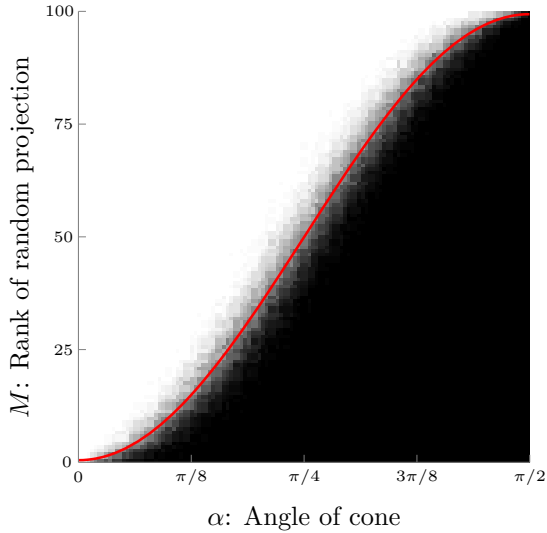}};
\draw[gray] (0,0) -- (6,0);
\draw[gray] (0,0) -- (0,6);
\draw[gray,very thin] (1.5,0) -- (1.5,0.1);
\draw[gray,very thin] (3,0) -- (3,0.1);
\draw[gray,very thin] (4.5,0) -- (4.5,0.1);
\draw[gray,very thin] (6,0) -- (6,0.1);
\draw[gray,very thin] (0,1.5) -- (0.1,1.5);
\draw[gray,very thin] (0,3) -- (0.1,3);
\draw[gray,very thin] (0,4.5) -- (0.1,4.5);
\draw[gray,very thin] (0,6) -- (0.1,6);
\draw (0,-0.2) node {\tiny{$0$}};
\draw (1.5,-0.2) node {\tiny{$\pi/8$}};
\draw (3,-0.2) node {\tiny{$\pi/4$}};
\draw (4.5,-0.2) node {\tiny{$3\pi/8$}};
\draw (6,-0.2) node {\tiny{$\pi/2$}};
\draw (3,-0.75) node {$\alpha$: Angle of cone};
\draw (-0.15,0) node {\tiny{$0$}};
\draw (-0.2,1.5) node {\tiny{$25$}};
\draw (-0.2,3) node {\tiny{$50$}};
\draw (-0.2,4.5) node {\tiny{$75$}};
\draw (-0.25,6) node {\tiny{$100$}};
\draw (-0.75,3) node [rotate=90] {$M$: Rank of random projection};
\end{tikzpicture}
\caption{
\label{figure_circcone_phasetransition}
{\small 
Phase transition for a random null space to avoid a circular cone.
Fixing the ambient dimension to be $N=100$, then for each $\alpha=1:\pi/200:\pi/2$ and $M=1:100$, we randomly drew $100$ $M\times N$ matrices with iid $\mathcal{N}(0,1)$ entries and plotted the proportion whose null spaces avoided the circular cone with angle $\alpha$.
As expected, if $\alpha$ is large, then so must $M$ so that the null space is small enough to avoid the cone.
In red, we plot the curve $M=N\sin^2\alpha+\cos2\alpha$, which captures the phase transition by Theorem~I and Proposition~4.3 in~\cite{AmelunxenLMT:13}.
By Lemma~\ref{lemma.circcone lower bound}, the circular cone is precisely the difference cone of two balls, and so this phase transition solves the Rare Eclipse Problem in this special case.
}\normalsize}
\end{center}
\end{figure}

\subsection{The case of two ellipsoids}

Now that we have solved the Rare Eclipse Problem for balls, we consider the slightly more general case of ellipsoids.
Actually, this case is somewhat representative of the general problem with arbitrary convex sets.
This can be seen by appealing to the following result of Paouris~\cite{Paouris:06}:

\begin{theorem}[Concentration of Volume]
\label{theorem.concentration of volume}
There is an absolute constant $c>0$ such that the following holds:
Given a convex set $K\subseteq\mathbb{R}^N$, draw a random vector $X$ uniformly from $K$.
Suppose $K$ has the property that $\mathbb{E}[X]=0$ and $\mathbb{E}[XX^\top]=I$.
Then
\begin{equation*}
\operatorname{Pr}\Big(\|X\|_2>r\Big)\leq e^{-cr}
\qquad
\forall r\geq\sqrt{N}.
\end{equation*}
\end{theorem}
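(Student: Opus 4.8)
\emph{Proof proposal.} This is Paouris' large-deviation inequality for isotropic log-concave measures \cite{Paouris:06}, and my plan is to follow its architecture. The uniform distribution on a convex body is log-concave, and the normalization $\mathbb{E}[X]=0$, $\mathbb{E}[XX^\top]=I$ puts $X$ in isotropic position; in particular $I_2(X):=\big(\mathbb{E}\|X\|_2^2\big)^{1/2}=\sqrt{N}$. The first step is to reduce the tail bound to a moment estimate: for any $q\geq 1$, Markov's inequality applied to $\|X\|_2^q$ gives $\operatorname{Pr}(\|X\|_2>r)\leq \big(I_q(X)/r\big)^q$, where $I_q(X):=\big(\mathbb{E}\|X\|_2^q\big)^{1/q}$. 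Hence it suffices to prove the uniform bound
\begin{equation*}
I_q(X)\leq C\max\{\sqrt{N},\,q\}\qquad\text{for all }q\geq 1,
\end{equation*}
with $C$ an absolute constant: taking $q$ equal to a small fixed multiple of $r$ then yields $\operatorname{Pr}(\|X\|_2>r)\leq e^{-cr}$ once $r$ exceeds a fixed multiple of $\sqrt{N}$.

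The crux is this moment bound, and the naive route falls short. Borell's lemma (equivalently, the reverse H\"older inequality for seminorms under log-concave measures) only gives $I_q(X)\lesssim q\,I_2(X)=q\sqrt{N}$, which at the critical scale $q\asymp\sqrt{N}$ overshoots the target $\sqrt{N}$ by a whole factor of $\sqrt{N}$. Removing this spurious $\sqrt{N}$ is exactly the content of Paouris' theorem. The engine is the family of $L_q$-centroid bodies $Z_q(X)$, defined by $h_{Z_q(X)}(\theta)=\big(\mathbb{E}|\langle X,\theta\rangle|^q\big)^{1/q}$, together with a critical parameter $q_*(X)$ --- roughly, the largest $q$ up to which the rescaled bodies $q^{-1/2}Z_q(X)$ remain comparable (in the parameters that matter) to the Euclidean ball, equivalently the scale at which the Euclidean moments have not yet begun to grow. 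I would establish two facts. First, $q_*(X)$ controls the Euclidean moments: $I_q(X)\lesssim I_2(X)$ for $q\leq q_*(X)$, while for $q\geq q_*(X)$ the Borell-type comparison upgrades to $I_q(X)\lesssim \big(q/q_*(X)\big)\,I_2(X)$. Second --- and this is the heart of the matter --- $q_*(X)\gtrsim\sqrt{N}$ for \emph{every} isotropic log-concave $X$; establishing this lower bound is where one brings in volume estimates for the bodies $Z_q(X)$ (via Lozanovskii/reverse-Santal\'o-type inequalities) together with the subexponential concentration of the linear marginals $\langle X,\theta\rangle$. Granting $q_*(X)\gtrsim\sqrt{N}$, the second bound reads $I_q(X)\lesssim (q/\sqrt{N})\sqrt{N}=q$ for $q\gtrsim\sqrt{N}$, and combining the two regimes gives exactly $I_q(X)\lesssim\max\{\sqrt{N},q\}$, which closes the reduction.

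The main obstacle is precisely the lower bound $q_*(X)\gtrsim\sqrt{N}$ --- the ``$\sqrt{N}$ barrier'' --- which is the genuinely deep part of \cite{Paouris:06}; the Markov reduction and the bookkeeping that ties $I_q$ to $q_*$ are routine by comparison. In practice I would simply invoke \cite{Paouris:06}; for a self-contained treatment one can follow the streamlined exposition of Paouris' theorem in the monograph of Brazitikos, Giannopoulos, Valettas and Vritsiou on the geometry of isotropic convex bodies, which lays the argument out as (i) the basic properties of the $Z_q$ bodies, (ii) the key volume estimate, (iii) passing from that estimate to the moment bound on $I_q$, and (iv) Markov.
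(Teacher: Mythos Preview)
Your outline is a faithful sketch of Paouris' argument from \cite{Paouris:06}, but there is nothing in the paper to compare it against: the paper does not prove this theorem. It is stated as a known result of Paouris and invoked as a black box to motivate why the ellipsoid case is representative of general convex sets (see the discussion immediately following the theorem statement in Section~3.2). The appendix proves Gordon's escape theorem, Lemma~\ref{lemma.circcone lower bound}, and Theorem~\ref{theorem.two ellipsoids}, but not Theorem~\ref{theorem.concentration of volume}.

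So your proposal is not wrong --- the Markov reduction to the moment bound $I_q(X)\lesssim\max\{\sqrt{N},q\}$, the role of the $L_q$-centroid bodies $Z_q(X)$ and the critical parameter $q_*(X)$, and the identification of the $q_*(X)\gtrsim\sqrt{N}$ lower bound as the deep step are all accurate --- it is simply that the paper's ``proof'' is a citation, and you have gone well beyond what the paper itself does. If the goal is to match the paper, one sentence pointing to \cite{Paouris:06} suffices.
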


In words, the above theorem says that the volume of an isotropic convex set is concentrated in a round ball.
The radius of the ball of concentration is $O(\sqrt{N})$, which corresponds to the fact that $\mathbb{E}\|X\|_2^2=\mathbb{E}\operatorname{Tr}[XX^\top]=N$.
This result can be modified to describe volume concentration of any convex set (isotropic or not).
To see this, consider any convex set $K\subseteq\mathbb{R}^N$ of full dimension (otherwise the volume is zero).
Then taking $Y$ to be a random vector drawn uniformly from $K$, we define the centroid $c:=\mathbb{E}[Y]$.
Also, since $K$ has full dimension, the inertia matrix $\mathbb{E}[(Y-c)(Y-c)^\top]$ is symmetric and positive definite, and we can take $A_0:=(\mathbb{E}[(Y-c)(Y-c)^\top])^{1/2}$.
It is straightforward to verify that $X:=A_0^{-1}(Y-c)$ is distributed uniformly over $K':=A_0^{-1}(K-c)$, and that $K'$ satisfies the hypotheses of Theorem~\ref{theorem.concentration of volume}.
We claim that $Y$ is concentrated in an ellipsoid defined by
\begin{equation*}
S_r:=\{c+rA_0x:x\in\mathcal{B}\}
\end{equation*}
for some $r\geq\sqrt{N}$, where $\mathcal{B}$ denotes the ball centered at $0$ of radius $1$.
Indeed, Theorem~\ref{theorem.concentration of volume} gives
\begin{equation*}
\operatorname{Pr}\Big(Y\not\in S_r\Big)
=\operatorname{Pr}\Big(\|A_0^{-1}(Y-c)\|_2>r\Big)
\leq e^{-cr}.
\end{equation*}
Overall, the vast majority of any convex set is contained in an ellipsoid defined by its centroid and inertia matrix, and so two convex sets are \textit{nearly} linearly separable if the corresponding ellipsoids are linearly separable.
(A similar argument relates the case of two ellipsoids to a mixture of two Gaussians.)

Note that any ellipsoid has the following convenient form:
\begin{equation*}
\{c+Ax:x\in\mathcal{B}\},
\end{equation*}
where $c\in\mathbb{R}^N$ is the center of the ellipsoid, $A$ is some $N\times N$ symmetric and positive semidefinite matrix, and $\mathcal{B}$ denotes the ball centered at the origin of radius $1$.
Intuitively, the difference cone of any two ellipsoids will not be circular in general, as it was in the case of two balls.
Indeed, the oblong shape of each ellipsoid (determined by its shape matrix $A$) precludes most of the useful symmetries in the difference cone, and as such, the analysis of the size of the cone is more difficult.  Still, we established the following upper bound on the Gaussian width in the general case, which by Corollaries~\ref{corollary.Gordon} and~\ref{corollary.AKF}, translates to a sufficient number of rows for a random projection to typically maintain separation:

\begin{theorem}
\label{theorem.two ellipsoids}
For $i=1,2$, take ellipsoids $S_i:=\{c_i+A_ix:x\in \mathcal{B}\}$, where $c_i\in\mathbb{R}^N$, $A_i$ is symmetric and positive semidefinite, and $\mathcal{B}$ denotes the ball centered at $0$ of radius $1$.
Let $w_\cap$ denote the Gaussian width of the intersection between the unit sphere in $\mathbb{R}^N$ and the cone generated by the Minkowski difference $S_1-S_2$.
Then
\begin{equation*}
w_\cap\leq\frac{\|A_1\|_F+\|A_2\|_F}{\zeta-\big(\|A_1e\|_2+\|A_2e\|_2\big)}+\frac{1}{\sqrt{2\pi}}
\end{equation*}
provided $\zeta>\|A_1e\|_2+\|A_2e\|_2$; here, $\zeta:=\|c_2-c_1\|$ and $e:=(c_1-c_2)/\|c_1-c_2\|$.
\end{theorem}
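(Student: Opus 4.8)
The plan is to combine the polar-cone characterization of the difference cone with an explicit near-projection of a Gaussian vector onto that polar cone. Write $C:=\operatorname{cone}(S_1-S_2)$. The hypothesis $\zeta>\|A_1e\|_2+\|A_2e\|_2$ forces the hyperplane with normal $e$ through $\tfrac12(c_1+c_2)$ to strictly separate $S_1$ from $S_2$, so $0\notin S_1-S_2$; since in addition $S_1-S_2=(c_1-c_2)+A_1\mathcal{B}+A_2\mathcal{B}$ is compact and convex, $C$ is a closed convex cone. Applying the Moreau decomposition $g=\Pi_{C^\circ}g+\Pi_{C}g$ into orthogonal summands (with $\Pi_{C^\circ}g\in C^\circ$, $\Pi_C g\in C$) — the standard route for bounding the Gaussian width of a cone by the expected distance to its polar (cf.~\cite{ChandrasekaranRP:12}) — gives, for every unit vector $z\in C$, that $\langle z,g\rangle\le\langle z,\Pi_C g\rangle\le\|\Pi_C g\|_2=\operatorname{dist}(g,C^\circ)$, whence
\begin{equation*}
w_\cap=\mathbb{E}_g\Big[\sup_{z\in C\cap\mathbb{S}^{N-1}}\langle z,g\rangle\Big]\le\mathbb{E}_g\big[\operatorname{dist}(g,C^\circ)\big].
\end{equation*}
Moreover, since $u\in C^\circ$ exactly when $\langle u,z\rangle\le0$ for all $z\in S_1-S_2$, and since each $A_i$ is symmetric so that $\sup_{x\in\mathcal{B}}\langle u,A_ix\rangle=\|A_iu\|_2$, the polar cone is the explicit set $C^\circ=\{u:\zeta\langle u,e\rangle+\|A_1u\|_2+\|A_2u\|_2\le0\}$.

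The heart of the argument is to exhibit, for each $g$, a point of $C^\circ$ close to $g$. Set $\delta:=\zeta-(\|A_1e\|_2+\|A_2e\|_2)>0$, decompose $g=\langle g,e\rangle e+g^\perp$ with $g^\perp\perp e$, let $\beta_0:=-(\|A_1g^\perp\|_2+\|A_2g^\perp\|_2)/\delta\le0$, and take
\begin{equation*}
u:=g^\perp+\min\big(\langle g,e\rangle,\beta_0\big)\,e .
\end{equation*}
Using only the triangle inequality $\|A_ig^\perp+\beta A_ie\|_2\le\|A_ig^\perp\|_2+|\beta|\,\|A_ie\|_2$ for $\beta\le0$, one checks directly that $u\in C^\circ$ in each of the two cases $\langle g,e\rangle\le\beta_0$ (where $u=g$) and $\langle g,e\rangle>\beta_0$ (where $u=g^\perp+\beta_0 e$); since $g-u$ is a scalar multiple of $e$, this yields $\operatorname{dist}(g,C^\circ)\le\|g-u\|_2\le(\langle g,e\rangle)_++(\|A_1g^\perp\|_2+\|A_2g^\perp\|_2)/\delta$, where $(t)_+:=\max(t,0)$. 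Taking expectations then finishes the proof: $\langle g,e\rangle\sim\mathcal{N}(0,1)$ gives $\mathbb{E}[(\langle g,e\rangle)_+]=1/\sqrt{2\pi}$, and since $g^\perp$ is Gaussian with covariance $I-ee^\top$ we get $\mathbb{E}\|A_ig^\perp\|_2\le\big(\operatorname{Tr}(A_i^2(I-ee^\top))\big)^{1/2}=\big(\|A_i\|_F^2-\|A_ie\|_2^2\big)^{1/2}\le\|A_i\|_F$; plugging these into the displayed bound and using linearity gives $w_\cap\le 1/\sqrt{2\pi}+(\|A_1\|_F+\|A_2\|_F)/\delta$, as claimed.

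I expect the only non-routine step to be the construction of $u(g)$ in the second paragraph — guessing that one should perturb $g$ only along the direction $e$, and verifying membership in $C^\circ$. The subtle point there is that one pays only for the positive part $(\langle g,e\rangle)_+$ rather than for $|\langle g,e\rangle|$: when $\langle g,e\rangle\le\beta_0$ the vector $g$ already lies in $C^\circ$, which is exactly what keeps the additive constant at $1/\sqrt{2\pi}$ rather than $\mathbb{E}|\langle g,e\rangle|=\sqrt{2/\pi}$. Everything else — the polarity estimate, the closed-form description of $C^\circ$, and the second-moment identity $\mathbb{E}\|A_ig^\perp\|_2^2=\|A_i\|_F^2-\|A_ie\|_2^2$ — is standard.
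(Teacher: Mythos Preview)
Your proof is correct and follows essentially the same approach as the paper's: bound $w_\cap$ by the expected distance to the polar cone (the paper cites this as Proposition~3.6 of \cite{ChandrasekaranRP:12}), derive the same closed-form description of $C^\circ$, and then exhibit a point of $C^\circ$ obtained from $g$ by sliding only in the $e$-direction, using exactly the triangle-inequality threshold you call $\beta_0$ (the paper's $\alpha^*=-\beta_0$). The only cosmetic difference is that the paper first computes $\mathbb{E}_{g_1}[(\alpha^*-g_1)_+\mid g_2]=\alpha^*\Phi(\alpha^*)+\tfrac{1}{\sqrt{2\pi}}e^{-(\alpha^*)^2/2}$ exactly and then bounds it by $\alpha^*+1/\sqrt{2\pi}$, whereas you apply the pointwise inequality $(\langle g,e\rangle-\beta_0)_+\le(\langle g,e\rangle)_++(-\beta_0)$ before taking expectations; both routes land on the same bound.
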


The proof is technical and can be found in the appendix, but the ideas behind the proof are interesting.
There are two main ingredients, the first of which is the following result:

\begin{proposition}[Proposition~3.6 in~\cite{ChandrasekaranRP:12}]
\label{proposition 3.6}
Let $\mathcal{C}$ be any non-empty convex cone in $\mathbb{R}^N$, and let $g$ be an $N$-dimensional vector with iid $\mathcal{N}(0,1)$ entries.
Then
\begin{equation*}
w(\mathcal{C}\cap\mathbb{S}^{N-1})
\leq\mathbb{E}_g\Big[\|g-\Pi_{\mathcal{C}^*}(g)\|_2\Big],
\end{equation*}
where $\Pi_{\mathcal{C}^*}$ denotes the Euclidean projection onto the dual cone $\mathcal{C}^*$ of $\mathcal{C}$.
\end{proposition}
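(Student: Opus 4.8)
The plan is to reduce the statement to an inequality that holds pointwise in $g$ and then integrate. Unwinding the definition of Gaussian width, the left-hand side is
\[
w(\mathcal{C}\cap\mathbb{S}^{N-1})=\mathbb{E}_g\Big[\sup_{z\in\mathcal{C},\,\|z\|_2=1}\langle z,g\rangle\Big],
\]
so it suffices to prove that for every fixed realization of $g$ one has $\sup_{z\in\mathcal{C},\,\|z\|_2=1}\langle z,g\rangle\leq\|g-\Pi_{\mathcal{C}^*}(g)\|_2$; the proposition then follows upon taking expectations, since expectation is monotone. One should first dispose of the degenerate case $\mathcal{C}\cap\mathbb{S}^{N-1}=\emptyset$ (e.g.\ $\mathcal{C}=\{0\}$), where the supremum is vacuous and the right-hand side is the distance to $\mathcal{C}^*=\mathbb{R}^N$, namely $0$, so nothing is to be shown.

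To establish the pointwise bound I would exploit the defining property of the dual cone: every $u\in\mathcal{C}^*$ satisfies $\langle u,z\rangle\leq 0$ for all $z\in\mathcal{C}$. Fix any unit vector $z\in\mathcal{C}$ and any $u\in\mathcal{C}^*$, and split
\[
\langle z,g\rangle=\langle z,u\rangle+\langle z,g-u\rangle\leq\langle z,g-u\rangle\leq\|z\|_2\,\|g-u\|_2=\|g-u\|_2,
\]
where the first inequality uses $\langle z,u\rangle\leq0$ and the second is Cauchy--Schwarz together with $\|z\|_2=1$. Since this holds for every admissible $z$, taking the supremum over $z$ gives $\sup_{z}\langle z,g\rangle\leq\|g-u\|_2$.

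The force of the last display is that $u\in\mathcal{C}^*$ was arbitrary, so I may minimize its right-hand side over $u\in\mathcal{C}^*$. Because $\mathcal{C}^*$ is a nonempty closed convex set, the minimizer exists and is exactly the Euclidean projection $\Pi_{\mathcal{C}^*}(g)$, whence $\inf_{u\in\mathcal{C}^*}\|g-u\|_2=\|g-\Pi_{\mathcal{C}^*}(g)\|_2$. This yields the desired pointwise inequality, and integrating over $g$ completes the argument.

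The computation is short, so there is no serious conceptual obstacle; the points requiring care are bookkeeping. The first is the sign convention: the inequality is correct precisely when $\mathcal{C}^*$ is taken to be the polar cone $\{u:\langle u,z\rangle\leq0\ \forall z\in\mathcal{C}\}$, and one must use the corresponding orientation $\langle z,u\rangle\leq0$ throughout (the naive ``$\geq 0$'' dual cone would reverse the bound and fail already for a single ray). The second is ensuring the supremum inside the expectation is measurable in $g$, which holds because it is a supremum of continuous linear functionals, hence lower semicontinuous and Borel measurable, so that the passage from $\mathbb{E}_g[\sup_z(\cdot)]$ to $\mathbb{E}_g[\|g-\Pi_{\mathcal{C}^*}(g)\|_2]$ is legitimate.
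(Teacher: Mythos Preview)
Your argument is correct. Note, however, that the paper does not supply its own proof of this proposition: it is quoted verbatim as Proposition~3.6 of \cite{ChandrasekaranRP:12} and used as a black box in the proof of Theorem~\ref{theorem.two ellipsoids}. Your pointwise bound via $\langle z,g\rangle=\langle z,u\rangle+\langle z,g-u\rangle\leq\|g-u\|_2$ followed by minimizing over $u\in\mathcal{C}^*$ is the standard route; it is equivalent to (and slightly more elementary than) the Moreau-decomposition phrasing one often sees, where one writes $g=\Pi_{\mathcal{C}}(g)+\Pi_{\mathcal{C}^*}(g)$ with the two summands orthogonal and observes that $\sup_{z}\langle z,g\rangle\leq\|\Pi_{\mathcal{C}}(g)\|_2=\|g-\Pi_{\mathcal{C}^*}(g)\|_2$. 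Your remarks on the sign convention for the polar cone and on measurability are apt and match how the paper uses the result.
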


Proposition~\ref{proposition 3.6} is essentially a statement about convex duality, and while it provides an upper bound on $w_\cap$, in our case, it is difficult to find a closed form expression for the right-hand side.
However, the bound is in terms of distance to the dual cone, and so any point in this cone provides an upper bound on this distance.
This leads to the second main ingredient in our analysis:
We choose a convenient mapping $\widetilde{\Pi}$ that sends any vector $g$ to a point in $\mathcal{C}^*$ (but not necessarily the closest point), while at the same time allowing the expectation of $\|g-\widetilde{\Pi}(g)\|_2$ to have a closed form.
Since $\|g-\Pi_{\mathcal{C}^*}(g)\|_2\leq\|g-\widetilde{\Pi}(g)\|_2$ for every possible instance of $g$, this produces a closed-form upper bound on the bound in Proposition~\ref{proposition 3.6}.

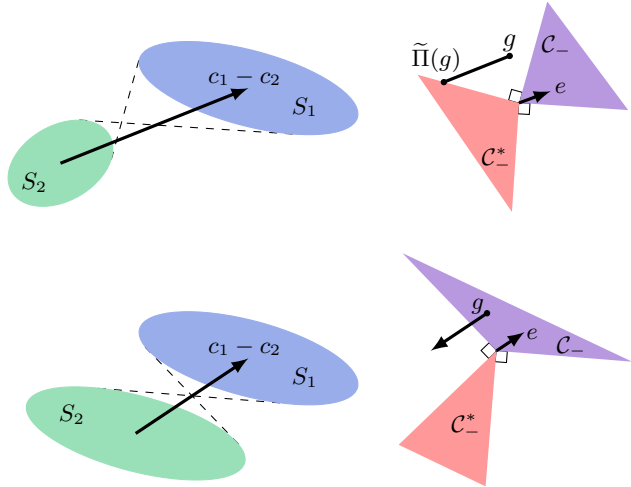
\begin{figure}
\begin{center}
\begin{tabular}{cc}
\vspace{12pt}
\hfill
\begin{tikzpicture}[scale=0.5,>=latex]
\draw [dashed] (-1.15,-0.9) -- (-0.4,1.8);
\draw [dashed] (-2,0.15) -- (3.8,-0.23);
\filldraw [green!20!blue!40!white] [shift={(2.5,1)},rotate=-15] (0,0) ellipse [x radius=3cm, y radius=1cm];
\filldraw [blue!30!green!40!white] [shift={(-2.5,-1)},rotate=30] ellipse [x radius=1.5cm, y radius=1cm];
\draw [->,very thick] (-2.5,-1) -- (2.5,1);
\draw (2.4,1.3) node {\small{$c_1-c_2$}};
\draw (4,0.5) node {\small{$S_1$}};
\draw (-3.2,-1.5) node {\small{$S_2$}};
\end{tikzpicture}
&
\vspace{0pt}
\begin{tikzpicture}[scale=0.5,>=latex]
\draw [shift={(-0.01,0.01)},rotate=-93.5] (0,0) rectangle (0.3,0.3);
\draw [shift={(0.01,-0.01)},rotate=74.5] (0,0) rectangle (0.3,0.3);
\fill [red!30!blue!40!white] (0,0) -- (2.9,-0.19) -- (0.75,2.7) --cycle;
\fill [red!40!white] (0,0) -- (-0.19,-2.9) -- (-2.7,0.75) --cycle;
\draw [->,very thick] (0,0) -- (0.844,0.3125);
\filldraw [black] (-0.25,1.25) circle [radius=0.07cm];
\draw (-0.25,1.6) node {$g$};
\draw [very thick] (-0.25,1.25) -- (-2,0.553);
\filldraw [black] (-2,0.553) circle [radius=0.07cm];
\draw (-2.2,1.2) node {\small{$\widetilde{\Pi}(g)$}};
\draw (1.1,0.4) node {\small{$e$}};
\draw (1,1.5) node {\small{$\mathcal{C}_-$}};
\draw (-0.6,-1.5) node {\small{$\mathcal{C}_-^*$}};
\end{tikzpicture}
\\
\vspace{12pt}
\hfill
\begin{tikzpicture}[scale=0.5,>=latex]
\draw [dashed] (-1.25,1.3) -- (1.25,-1.3);
\draw [dashed] (-2.5,0.2) -- (2.5,-0.2);
\filldraw [green!20!blue!40!white] [shift={(1.5,1)},rotate=-15] (0,0) ellipse [x radius=3cm, y radius=1cm];
\filldraw [blue!30!green!40!white] [shift={(-1.5,-1)},rotate=-15] ellipse [x radius=3cm, y radius=1cm];
\draw [->,very thick] (-1.5,-1) -- (1.5,1);
\draw (1.4,1.3) node {\small{$c_1-c_2$}};
\draw (3,0.5) node {\small{$S_1$}};
\draw (-3.2,-0.5) node {\small{$S_2$}};
\end{tikzpicture}
&
\vspace{0pt}
\begin{tikzpicture}[scale=0.5,>=latex]
\draw [shift={(-0.01,0.01)},rotate=-94.5] (0,0) rectangle (0.3,0.3);
\draw [shift={(0.015,0)},rotate=133.8] (0,0) rectangle (0.3,0.3);
\fill [red!30!blue!40!white] (0,0) -- (3.607,-0.289) -- (-2.5,2.6) --cycle;
\fill [red!40!white] (0,0) -- (-0.289,-3.607) -- (-2.6,-2.5) --cycle;
\draw [->,very thick] (0,0) -- (0.75,0.5);
\filldraw [black] (-0.25,1) circle [radius=0.07cm];
\draw (-0.5,1.2) node {\small{$g$}};
\draw [->,very thick] (-0.25,1) -- (-1.75,0);
\draw (0.95,0.45) node {\small{$e$}};
\draw (2,0.1) node {\small{$\mathcal{C}_-$}};
\draw (-0.8,-2) node {\small{$\mathcal{C}_-^*$}};
\end{tikzpicture}
\end{tabular}
\end{center}
\caption{
\label{figure.pseudoprojection}
{\small 
Two examples of two ellipsoids along with their difference cone and dual cone.
For each pair, on the left, the vector $c_1-c_2$ is depicted, as are the extreme difference directions between the ellipsoids---these form the boundary of the difference cone $\mathcal{C}_-$, which is illustrated on the right along with its dual cone $\mathcal{C}_-^*$, i.e., the cone of separating hyperplanes.
The vector $e\in\mathcal{C}_-$ is a normalized version of $c_1-c_2$.
In the first example, the pseudoprojection $\widetilde{\Pi}$ sends any point $g$ to the closest point in the dual cone $\mathcal{C}_-^*$ along the line spanned by $e$.
Interestingly, in cases where the ellipsoids are far apart, the cone $\mathcal{C}_-$ will be narrow, and so the boundary of the dual cone will essentially be the orthogonal complement of $e$.
As such, the pseudoprojection is close the true projection onto the polar cone in this limiting case.
For this pseudoprojection to be well-defined, we require that for every $g$, the line which passes through $g$ in the direction of $e$ hits the dual cone at some point.
This is not always the case, as the second example illustrates.
It is straightforward to show that this pseudoprojection is well-defined if and only if the ellipsoids remain separated when projecting onto the line spanned by $e$.
}\normalsize}
\end{figure}

Figure~\ref{figure.pseudoprojection} illustrates how we chose the pseudoprojection $\widetilde{\Pi}$.
Interestingly, this pseudoprojection behaves more like the true projection when the ellipsoids are more distant from each other.
At the other extreme, note that Theorem~\ref{theorem.two ellipsoids} does not hold if the ellipsoids are too close, i.e., if $\|c_1-c_2\|\leq\|A_1e\|_2+\|A_2e\|_2$.
This occurs, for example, if the two ellipsoids collide after projecting onto the span of $e$; indeed, taking $x$ and $y$ to be unit-norm vectors such that $e^\top(c_1+A_1x)=e^\top(c_2+A_2y)$, then rearranging gives
\begin{equation*}
\|c_1-c_2\|
=e^\top c_1-e^\top c_2
=-e^\top A_1x+e^\top A_2y
\leq|e^\top A_1x|+|e^\top A_2y|
\leq\|A_1e\|_2+\|A_2e\|_2.
\end{equation*}
As Figure~\ref{figure.pseudoprojection} illustrates, our pseudoprojection even fails to be well-defined when the ellipsoids collide after projecting onto the span of $e$.
So why bother using a random projection to maintain linear separability when there is a rank-$1$ projection available?
There are two reasons:
First, calculating this rank-$1$ projection requires access to the centers of the ellipsoids, which are not available in certain applications (e.g., unsupervised or semi-supervised learning, or if the projection occurs blindly during the data collection step).
Second, the use of a random projection is useful when projecting multiple ellipsoids simultaneously to preserve pairwise linear separability---as we will detail in the next subsection, randomness allows one to appeal to the union bound in a way that permits several ellipsoids to be projected simultaneously using particularly few projected dimensions.

At this point, we compare Theorem~\ref{theorem.two ellipsoids} to the better understood case of two balls.
In this case, $A_1=r_1I$ and $A_2=r_2I$, and so Theorem~\ref{theorem.two ellipsoids} gives that
\begin{equation*}
w_\cap\leq\sqrt{N}\cdot\frac{r_1+r_2}{\|c_1-c_2\|_2-(r_1+r_2)}+\frac{1}{\sqrt{2\pi}}.
\end{equation*}
If we consider the regime in which $r_1+r_2\leq\frac{1}{2}\|c_1-c_2\|_2$, then we recover the case of two balls to within a factor of $2$, suggesting that the analysis is tight (at least in this case).
For a slightly more general lower bound, note that a projection maintains separation between two ellipsoids only if it maintains separation between balls contained in each ellipsoid.
The radius of the largest ball in the $i$th ellipsoid is equal to the smallest eigenvalue $\lambda_\textrm{min}(A_i)$ of the shape matrix $A_i$, and the center of this ball coincides with the center $c_i$ of its parent ellipsoid.
As such, we can again appeal to the case of two balls to see that Theorem~\ref{theorem.two ellipsoids} is reasonably tight for ellipsoids of reasonably small eccentricity $\lambda_\textrm{max}(A_i)/\lambda_\textrm{min}(A_i)$.  Closed form bounds for general ellipses with high eccentricity are unwieldy, but Figure~\ref{figure_ellipsoids_phasetransition} illustrates that our bound is far from tight when the ellipsoids are close to each other.
Still, the bound improves considerably as the distance increases.  As such, we leave improvements to Theorem~\ref{theorem.two ellipsoids} as an open problem (in particular, finding a closed-form characterization of the phase transition in terms of the $c_i$'s and $A_i$'s).

\begin{figure}[t]
\begin{center}
\begin{tabular}{cc}
\begin{tikzpicture}
\node[inner sep=0pt] (plot) at (3.07,3) {\includegraphics[height=6cm]{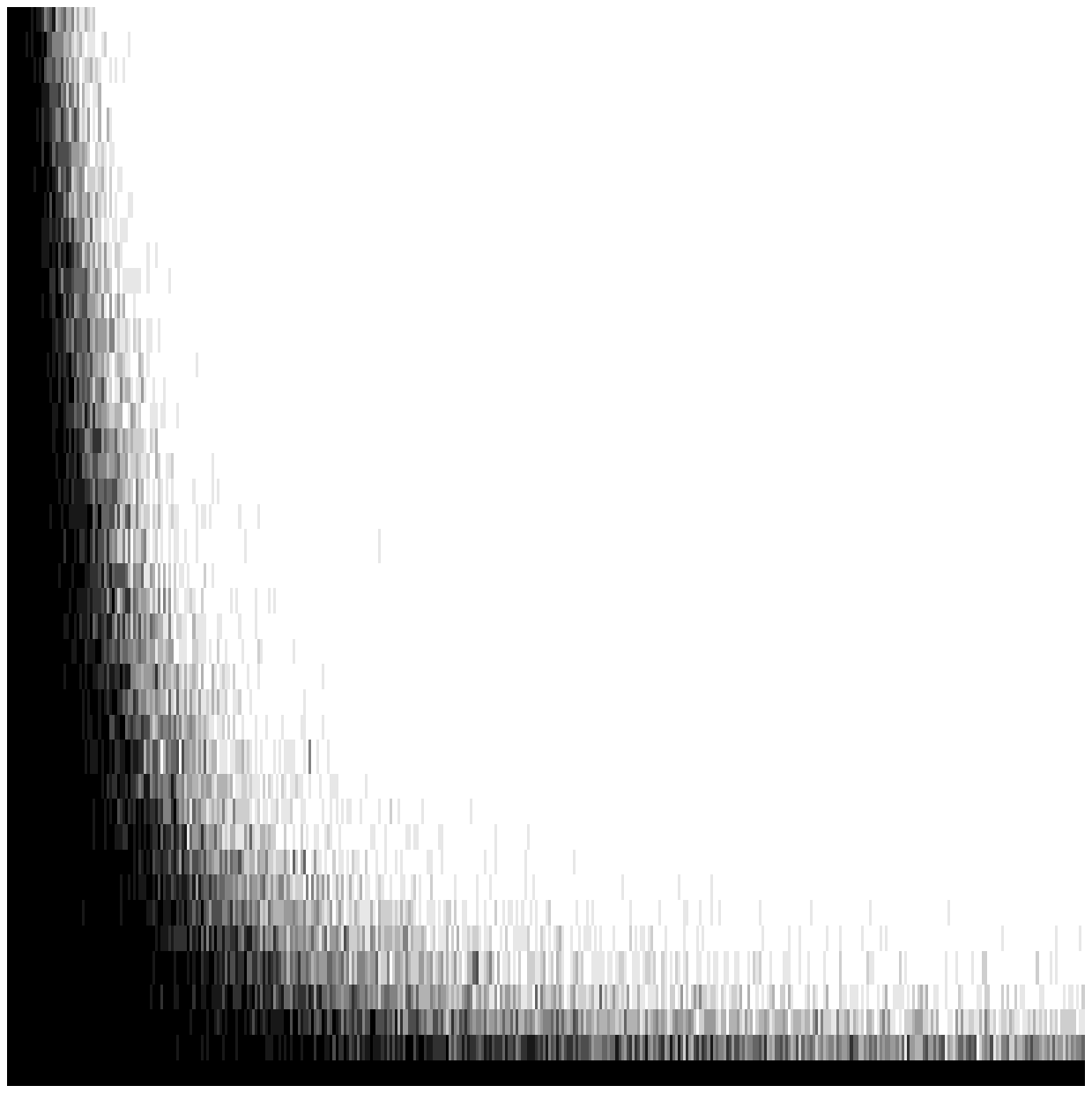}};
\draw[gray] (0,0) -- (6.135,0);
\draw[gray] (0,0) -- (0,6);
\draw[gray,very thin] (1.53375,0) -- (1.53375,0.1);
\draw[gray,very thin] (3.0675,0) -- (3.0675,0.1);
\draw[gray,very thin] (4.60125,0) -- (4.60125,0.1);
\draw[gray,very thin] (6.135,0) -- (6.135,0.1);
\draw[gray,very thin] (0,1.5) -- (0.1,1.5);
\draw[gray,very thin] (0,3) -- (0.1,3);
\draw[gray,very thin] (0,4.5) -- (0.1,4.5);
\draw[gray,very thin] (0,6) -- (0.1,6);
\draw (0,-0.2) node {\tiny{$0$}};
\draw (1.53375,-0.2) node {\tiny{$100$}};
\draw (3.0675,-0.2) node {\tiny{$200$}};
\draw (4.60125,-0.2) node {\tiny{$300$}};
\draw (6.135,-0.2) node {\tiny{$400$}};
\draw (3.0675,-0.75) node {$\zeta$: Distance between ellipsoid centers};
\draw (-0.15,0) node {\tiny{$0$}};
\draw (-0.2,1.5) node {\tiny{$10$}};
\draw (-0.2,3) node {\tiny{$20$}};
\draw (-0.2,4.5) node {\tiny{$30$}};
\draw (-0.2,6) node {\tiny{$40$}};
\draw (-0.75,3) node [rotate=90] {$M$: Rank of random projection};
\end{tikzpicture}
&
\begin{tikzpicture}
\node[inner sep=0pt] (plot) at (3.07,3) {\includegraphics[height=6cm]{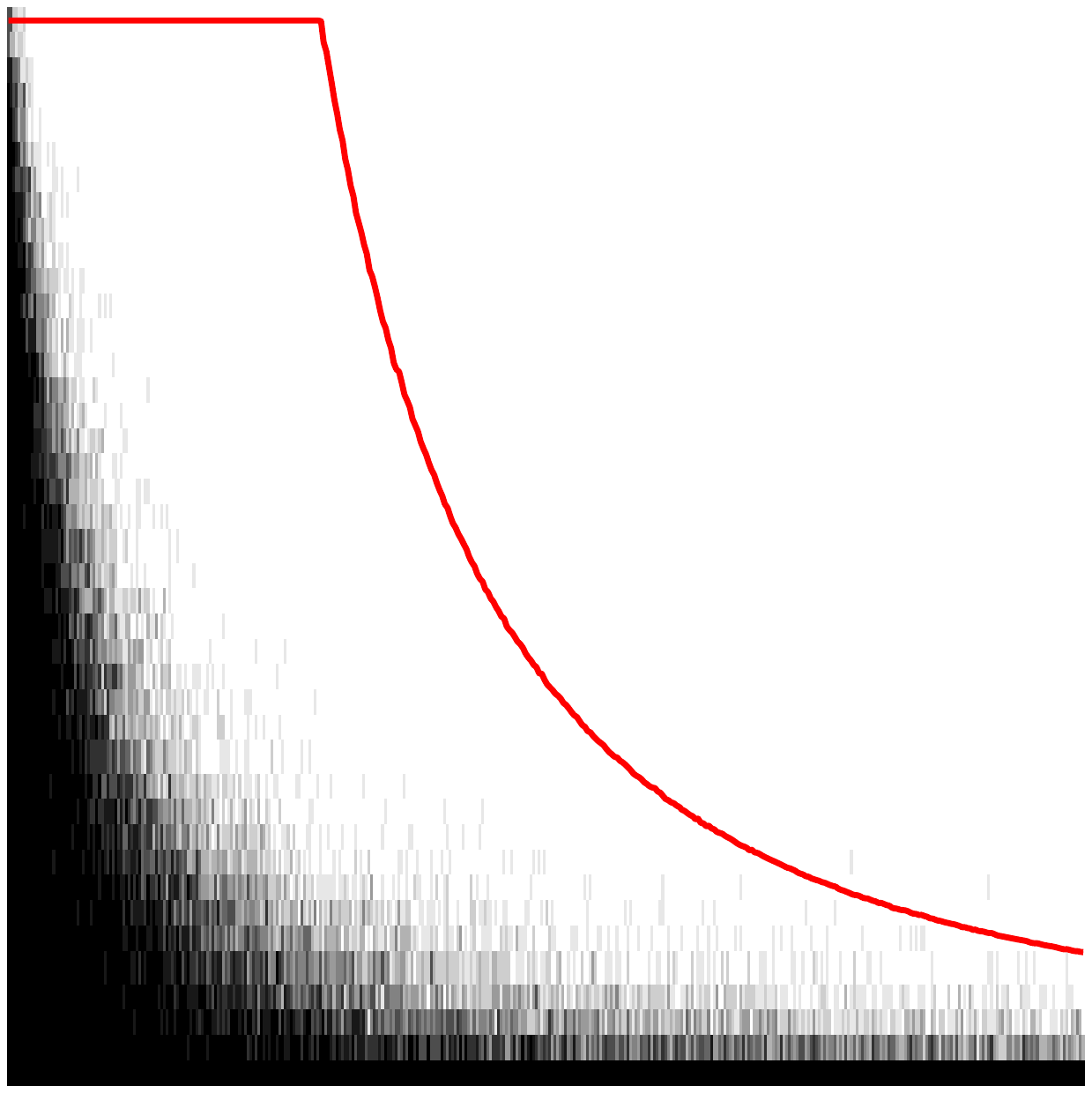}};
\draw[gray] (0,0) -- (6.135,0);
\draw[gray] (0,0) -- (0,6);
\draw[gray,very thin] (1.53375,0) -- (1.53375,0.1);
\draw[gray,very thin] (3.0675,0) -- (3.0675,0.1);
\draw[gray,very thin] (4.60125,0) -- (4.60125,0.1);
\draw[gray,very thin] (6.135,0) -- (6.135,0.1);
\draw[gray,very thin] (0,1.5) -- (0.1,1.5);
\draw[gray,very thin] (0,3) -- (0.1,3);
\draw[gray,very thin] (0,4.5) -- (0.1,4.5);
\draw[gray,very thin] (0,6) -- (0.1,6);
\draw (0,-0.2) node {\tiny{$0$}};
\draw (1.53375,-0.2) node {\tiny{$100$}};
\draw (3.0675,-0.2) node {\tiny{$200$}};
\draw (4.60125,-0.2) node {\tiny{$300$}};
\draw (6.135,-0.2) node {\tiny{$400$}};
\draw (3.0675,-0.75) node {$\zeta$: Distance between ellipsoid centers};
\draw (-0.15,0) node {\tiny{$0$}};
\draw (-0.2,1.5) node {\tiny{$10$}};
\draw (-0.2,3) node {\tiny{$20$}};
\draw (-0.2,4.5) node {\tiny{$30$}};
\draw (-0.2,6) node {\tiny{$40$}};
\end{tikzpicture}
\end{tabular}
\caption{
\label{figure_ellipsoids_phasetransition}
{\small 
Phase transition for a random projection to keep ellipsoids separated.
(a) Fixing the ambient dimension to be $N=40$, then for each $\zeta=1:400$ and $M=1:40$, we conducted $10$ trials.
For each trial, we randomly drew $A_1$ and $A_2$ as iid standard Wishart-distributed $N\times N$ matrices with $N$ degrees of freedom (i.e., $A_i=XX^\top$, where $X$ is $N\times N$ with iid $\mathcal{N}(0,1)$ entries), along with an $M\times N$ matrix $P$ with iid $\mathcal{N}(0,1)$ entries.
Plotted is the proportion of trials for which the ellipsoids are disjoint after applying $P$ (we did not record whether the ellipsoids were separated before projection).
For each of the 160,000 trials, the shape matrices satisfied $\zeta\leq\|A_1e\|_2+\|A_2e\|_2$, thereby rendering Theorem~\ref{theorem.two ellipsoids} irrelevant.
(b) Next, we performed the same experiment, except we changed the distribution of $A_1$ and $A_2$ so that $e$ is in the null space of both, and in the orthogonal complement of $e$, they are iid standard Wishart-distributed $(N-1)\times(N-1)$ matrices with $N-1$ degrees of freedom.
As such, the corresponding ellipsoids resided in parallel hyperplanes, and $\|A_1e\|_2+\|A_2e\|_2=0$ so that Theorem~\ref{theorem.two ellipsoids} applies.
For each trial, we stored the bound on $w_\cap$ from Theorem~\ref{theorem.two ellipsoids} and calculated the sample average of the squares of these bounds corresponding to each $\zeta=1:400$.
The red curve plots these sample averages (or 40, whichever is smaller)---think of this as an upper bound on the phase transition.
As one might expect, this bound appears to sharpen as the distance increases.
}\normalsize}
\end{center}
\end{figure}

\subsection{The case of multiple convex sets}

Various classification tasks require one to distinguish between several different classes, and so one might ask for a random projection to maintain pairwise linear separability.
For a fixed projection dimension $M$, let $\eta_{ij}$ denote the probability that convex classes $S_i$ and $S_j$ collide after projection.
Then the union bound gives that the probability of maintaining separation is $\geq 1-\sum_{i,j:i<j}\eta_{ij}$.

This use of the union bound helps to illustrate the freedom which comes with a random projection.
Recall that Theorem~\ref{theorem.two ellipsoids} requires that projecting the ellipsoids onto the line spanned by the difference $c_1-c_2$ of their centers maintains separation.
In the case of multiple ellipsoids, one might then be inclined to project onto the span of $\{c_i-c_j\}_{i,j:i<j}$.
Generically, such a choice of projection puts $M=\binom{K}{2}=\Omega(K^2)$, where $K$ is the total number of classes.
On the other hand, suppose each pairwise distance $\|c_i-c_j\|$ is so large that the $(i,j)$th Gaussian width satisfies 
\begin{equation*}
w_\cap<\sqrt{2\log\bigg(\frac{1}{p}\binom{K}{2}\bigg)}.
\end{equation*}
Then by Corollary~\ref{corollary.Gordon}, taking $M=8\log(\binom{K}{2}/p)+1=O_p(\log K)$ ensures that classes $S_i$ and $S_j$ collide after projection with probability $\eta_{ij}\leq p/\binom{K}{2}$, and so the probability of maintaining overall separation is $\geq 1-p$.
Of course, we will not save so much in the projection dimension when the convex bodies are closer to each other, but we certainly expect $M<K^2$ in reasonable cases.

At this point, we note the similarity between the performance $M=O(\log K)$ and what the Johnson--Lindenstrauss lemma guarantees when the classes are each a single point.
Indeed, a random projection of $M=\Omega_\epsilon(\log K)$ dimensions suffices to ensure that pairwise distances are preserved to within a factor of $1\pm\epsilon$ with constant probability; this in turn ensures that pairwise separated points remain pairwise separated after projection.
In fact, the proof technique for the Johnson--Lindenstrauss lemma is similar:
First prove that a random projection typically preserves the norm of any vector, and then perform a union bound over all $\binom{K}{2}$ difference vectors.
One might be inspired to use Johnson--Lindenstrauss ideas to prove a result analogous to Theorem~\ref{theorem.two ellipsoids} (this was actually an initial attempt by the authors).
Unfortunately, since Johnson--Lindenstrauss does not account for the shape matrices $A_i$ of the ellipsoids, one is inclined to consider worst-case orientations, and so terms like $\|A_ie\|_2$ are replaced by spectral norms $\|A_i\|_2$ in the analysis, thereby producing a strictly weaker result.
Dasgupta~\cite{Dasgupta:99} uses this Johnson--Lindenstrauss approach to project a mixture of Gaussians while maintaining some notion of separation.

\section{Random projection versus principal component analysis}

In this section, we compare the performance of random projection and principal component analysis (PCA) for dimensionality reduction.
First, we should briefly review how to perform PCA.
Consider a collection of data points $\{x_i\}_{i=1}^p\subseteq\mathbb{R}^N$, and define the empirical mean by $\bar{x}:=\frac{1}{p}\sum_{i=1}^px_i$.
Next, consider the empirical inertia matrix
\begin{equation*}
\widehat{\Sigma}
:=\frac{1}{p}\sum_{i=1}^p (x_i-\overline{x})(x_i-\bar{x})^\top 
=\frac{1}{p}\sum_{i=1}^p x_ix_i^\top-\bar{x}\bar{x}^\top.
\end{equation*}
The eigenvectors of $\widehat{\Sigma}$ with the largest eigenvalues are identified as the \textit{principal components}, and the idea of PCA is to project $\{x_i\}_{i=1}^p$ onto the span of these components for dimensionality reduction.

In this section, we will compare random projection with PCA in a couple of ways.
First, we observe some toy examples of data sets that illustrate when PCA is better, and when random projection is better.
Later, we make a comparison using a real-world hyperspectral data set.

\subsection{Comparison using toy examples}

Here, we consider a couple of extreme data sets which illustrate when PCA outperforms random projection and vice versa.
Our overarching model for the data sets will be the following:
Given a collection of disjoint balls $\{S_i\}_{i=1}^K$ in $\mathbb{R}^N$, we independently draw $p$ data points uniformly from $S:=\bigcup_{i=1}^K S_i$.
When $p$ is large, we can expect $\widehat{\Sigma}$ to be very close to
\begin{equation*}
\Sigma:=\frac{1}{\operatorname{vol}(S)}\sum_{i=1}^K\int_{S_i}xx^\top dx-\mu\mu^\top
\end{equation*}
by the law of large numbers; here, $\mu\in\mathbb{R}^N$ denotes the mean of the distribution.
Recall that the projection dimension for PCA is the number of large eigenvalues of $\widehat{\Sigma}$.
Since the operator spectrum is a continuous function of the operator, we can count large eigenvalues of $\Sigma$ to estimate this projection dimension.
The following lemma will be useful to this end:

\begin{lemma}
\label{lemma.pca}
Consider a ball of the form $S:=c+r\mathcal{B}$, where $\mathcal{B}\subseteq\mathbb{R}^N$ denotes the ball centered at $0$ of radius $1$.
Define the operator
\begin{equation*}
W:=\int_S xx^\top dx.
\end{equation*}
Then the span of $c$ and its orthogonal complement form the eigenspaces of $W$ with eigenvalues 
\begin{equation*}
\lambda_c
=r^N\|c\|^2\operatorname{vol}(\mathcal{B})+Cr^{N+2},
\qquad
\lambda_{c^\perp}
=Cr^{N+2},
\end{equation*}
respectively, where $C$ is some constant depending on $N$.
\end{lemma}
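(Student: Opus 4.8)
The plan is to reduce the integral to the unit ball by an affine substitution and then read off the spectrum using the rotational symmetry of $\mathcal{B}$. First I would substitute $x=c+ry$, so that $dx=r^N\,dy$ and $y$ ranges over $\mathcal{B}$. Expanding the rank-one integrand $xx^\top=(c+ry)(c+ry)^\top$ gives
\begin{equation*}
W=r^N\Big(cc^\top\!\int_{\mathcal B}\!dy+r\,c\Big(\!\int_{\mathcal B}\!y\,dy\Big)^{\!\top}+r\Big(\!\int_{\mathcal B}\!y\,dy\Big)c^\top+r^2\!\int_{\mathcal B}\!yy^\top\,dy\Big).
\end{equation*}
The two cross terms vanish: $y\mapsto-y$ is a measure-preserving involution of $\mathcal{B}$, so $\int_{\mathcal B}y\,dy=0$.

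Next I would show that $\int_{\mathcal B}yy^\top\,dy$ is a scalar multiple of the identity. For any orthogonal $U$, the substitution $y\mapsto Uy$ preserves both $\mathcal{B}$ and Lebesgue measure, so $U\big(\int_{\mathcal B}yy^\top\,dy\big)U^\top=\int_{\mathcal B}yy^\top\,dy$; a symmetric matrix invariant under conjugation by every rotation has all eigenvalues equal, hence equals a multiple of $I$. Taking traces identifies the multiple as $C:=\frac1N\int_{\mathcal B}\|y\|_2^2\,dy$, a constant depending only on $N$ (one could evaluate it explicitly in terms of $\operatorname{vol}(\mathcal B)$, but this is unnecessary). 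Therefore
\begin{equation*}
W=r^N\operatorname{vol}(\mathcal B)\,cc^\top+Cr^{N+2}I.
\end{equation*}

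Finally, I would diagonalize: $cc^\top$ has eigenvalue $\|c\|^2$ on $\operatorname{span}(c)$ and eigenvalue $0$ on $c^\perp$, and adding the multiple $Cr^{N+2}I$ of the identity shifts every eigenvalue by $Cr^{N+2}$ while leaving the eigenspaces unchanged. This yields $\lambda_c=r^N\|c\|^2\operatorname{vol}(\mathcal B)+Cr^{N+2}$ on $\operatorname{span}(c)$ and $\lambda_{c^\perp}=Cr^{N+2}$ on $c^\perp$, as claimed. There is no real obstacle here; the only points requiring care are the vanishing of the linear term (antipodal symmetry) and the isotropy of the second-moment matrix (rotational symmetry), both immediate consequences of the symmetry of $\mathcal{B}$, and the rest is bookkeeping of the powers of $r$.
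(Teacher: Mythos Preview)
Your proof is correct and follows essentially the same approach as the paper: the same affine substitution $x=c+ry$ and the same rotational-invariance argument to identify $\int_{\mathcal B}yy^\top\,dy=CI$. The only cosmetic difference is that you compute $W$ directly as a matrix and then diagonalize, whereas the paper computes the quadratic form $v^\top W v$ for unit $v$ and reads off the extremizers; your version is arguably a touch cleaner since it makes the vanishing of the cross terms explicit, which the paper glosses over.
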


\begin{proof}
Pick any vector $v\in\mathbb{R}^N$ of unit norm.
Then
\begin{equation*}
v^\top W v
=\int_\mathcal{B} v^\top(c+ry)(c+ry)^\top v r^N dy
=(v^\top c)^2\cdot r^N\operatorname{vol}(\mathcal{B})+r^{N+2}v^\top\bigg(\int_\mathcal{B} yy^\top dy\bigg)v.
\end{equation*}
Notice that the operator $\int_\mathcal{B} yy^\top dy$ is invariant under conjugation by any rotation matrix.
As such, this operator is a constant $C$ multiple of the identity operator.
Thus, $v^\top W v$ is maximized at $\lambda_c$ when $v$ is a normalized version of $c$, and minimized at $\lambda_{c^\perp}$ whenever $v$ is orthogonal to $c$.
\end{proof}

We start by considering the case where $S$ is composed of two balls, namely $S_1:=c+r\mathcal{B}$ and $S_2:=-c+r\mathcal{B}$.
As far as random projection is concerned, in this case, we are very familiar with the required projection dimension: $\Omega_\eta(Nr^2/\|c\|^2)$.
In particular, as $\|c\|$ approaches $r$, a random projection cannot provide much dimensionality reduction.
To compare with PCA, note that in this case, $\Sigma$ is a scalar multiple of $W_1+W_2$, where 
\begin{equation*}
W_i:=\int_{S_i} xx^\top dx.
\end{equation*}
Moreover, it is easy to show that $W_1=W_2$.
By Lemma~\ref{lemma.pca}, the dominant eigenvector of $W_i$ is $c$, and so PCA would suggest to project onto the one-dimensional subspace spanned by $c$.
Indeed, this projection always preserves separation, and so in this case, PCA provides a remarkable savings in projection dimension.

Now consider the case where $S$ is composed of $2N$ balls $\{S_{n,1}\}_{n=1}^N\cup\{S_{n,2}\}_{n=1}^N$ defined by $S_{n,1}:=e_n+r\mathcal{B}$ and $S_{n,2}:=-e_n+r\mathcal{B}$, where $e_n$ denotes the $n$th identity basis element.
Then $\Sigma$ is a scalar multiple of $\sum_{n=1}^N(W_{n,1}+W_{n,2})$, where
\begin{equation*}
W_{n,i}:=\int_{S_{n,i}} xx^\top dx.
\end{equation*}
Recall that $W_{n,1}=W_{n,2}$.
Then $\Sigma$ is simply a scalar multiple of $\sum_{n=1}^NW_{n,1}$.
By Lemma~\ref{lemma.pca}, the $W_{n,1}$'s are all diagonal, and their diagonals are translates of each other.
As such, their sum (and therefore $\Sigma$) is a scalar multiple of the identity matrix---in this case, PCA would choose to not project down to fewer dimensions.
On the other hand, if we take
\begin{equation*}
M>\left(\sqrt{N\bigg(\frac{2r}{\sqrt{2}}\bigg)^2+1}+\sqrt{2\log\bigg(\frac{1}{p}\binom{2N}{2}\bigg)}\right)^2+1,
\end{equation*}
then by Corollary~\ref{corollary.Gordon}, a random projection maintains separation between any fixed pair of balls from $\{S_{n,1}\}_{n=1}^N\cup\{S_{n,2}\}_{n=1}^N$ with probability $\geq 1-p/\binom{2N}{2}$, and so by the union bound, the balls are pairwise separated with probability $\geq1-p$.
In particular, if $r=O(N^{-1/2})$, then we can take $M=O_p(\log N)$.

Overall, random projection performs poorly when the classes are close, but when there are multiple sufficiently separated classes, you can expect a dramatic dimensionality reduction.
As for PCA, we have constructed a toy example for which PCA performs well (the case of two balls), but in general, the performance of PCA seems difficult to describe theoretically.
Whereas the performance of random projection can be expressed in terms of ``local'' conditions (e.g., pairwise separation), as the last example illustrates, the performance of PCA can be dictated by more ``global'' conditions (e.g., the geometric configuration of classes).
In the absence of theoretical guarantees for PCA, the following subsection provides simulations with real-world hyperspectral data to illustrate its performance compared to random projection.

\subsection{Simulations with hyperspectral data}

One specific application of dimensionality reduction is the classification of hyperspectral data.
For this application, the idea is to distinguish materials by observing them across hundreds of spectral bands (like the red, green and blue bands that the human eye detects).
Each pixel of a hyperspectral image can be viewed as a vector of spectral information, capturing how much light of various frequencies is being reradiated from that portion of the scene. 
A hyperspectral image is naturally represented as a data cube with two spatial indices and one spectral index, and a common task is to identify the material observed at each pair of spatial indices.
To do this, one might apply \textit{per-pixel classification}, in which a classifier simply identifies the material in a given pixel from its spectral content, ignoring any spatial context.
Since the spectral information is high-dimensional, it is natural to attempt dimensionality reduction before classification.
A popular choice for this task is PCA~\cite{Johnson:08,RobilaM:06}, and in this subsection, we provide some preliminary simulations to compare its performance with random projection.

All experiments described in this subsection were conducted using the Indian Pines hyperspectral data set~\cite{HRSS:online}. 
This data set consists of a hyperspectral image with $145\times 145$ pixels, each containing spectral reflectance data represented by a vector of length $N=200$.
Each pixel corresponds to a particular type of vegetation or crop, such as corn or wheat, with a total of $17$ different classes (see Figure~\ref{figureIndianPines} for an illustration).

\begin{figure}[t]
\begin{center}
\includegraphics[width=0.7\textwidth]{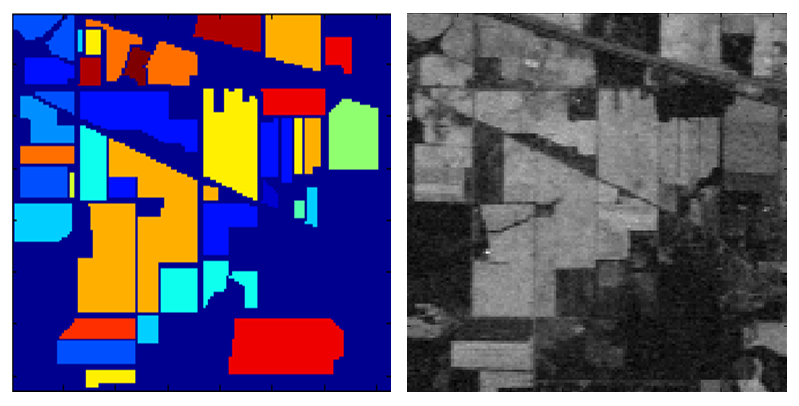}
\caption{
\label{figureIndianPines}
{\small 
The Indian Pines hyperspectral data set~\cite{HRSS:online}.
Each pixel corresponds to a different type of vegetation or crop.
The ground truth image of labels is depicted on the left, and a sample band of the data set is displayed on the right.
}\normalsize}
\end{center}
\end{figure}

For our simulations, the task will consist of using the known labels of a training set (a small subset of the $21,025 = 145\times 145$ pixels) to make accurate predictions for the remainder of the pixels.
To keep the simulations fast, each simulation considers a small patch of pixels.
More precisely, given a patch of $p$ pixels and a prescribed training ratio $r$, we pick a random subset of the pixels of size $rp$ to be the training set.
We use the labels from this training set to train a classifier that will then attempt to guess the label of each of the other $(1-r)p$ pixels from the location of its spectral reflectance in $200$-dimensional space.
The classifier we use is MATLAB's built-in implementation of multinomial logistic regression.
Performance is measured by classification error and runtime.

Given this setting, for different values of projection dimension $M$, we draw an $M\times N$ matrix $P$ with iid $\mathcal{N}(0,1)$ entries and replace every spectral reflectance data point $x$ by $Px$.
In the degenerate case $M=N$, we simply take $P$ to be the identity matrix.
For comparison, we also use principal component analysis (PCA) for dimensionality reduction, which will interrogate the training set to identify $M$ principal components before projecting the data set onto the span of these components.
An immediate advantage of random projection is that it allows the sensing mechanism to blindly compress the data, as it does not need a training set to determine the compression function.

Figure~\ref{figureIndianPines2_trainningratio0.5and0.2} uses different patches of the Indian Pines dataset and different training ratios to compare both the classification accuracy and runtime of multinomial logistic regression when applied to various projections of the data set.
The first experiment focuses on a small patch of $225$ pixels, and the second considers a patch of $3481$ pixels.
These experiments reveal a few interesting phenomena.
First of all, dimensionality reduction leads to impressive speedups in runtime.
Perhaps more surprising is the fact that there seems to be an improvement in classification performance after projecting the data.
We are far from completely understanding this behavior, but we suspect it has to do with regularization and overfitting.

\begin{figure}
\begin{center}
\includegraphics[trim=4cm 0cm 4cm 0cm, width=0.8\textwidth, height=0.4\textheight]{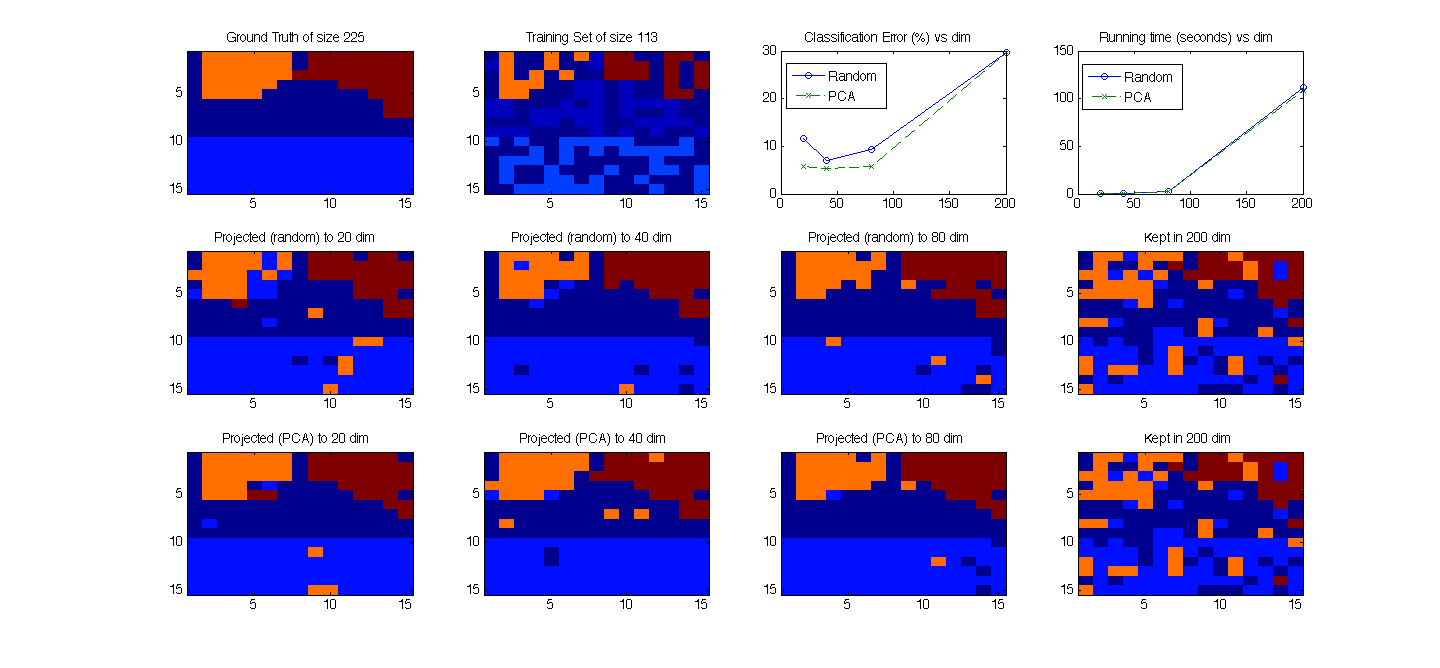}
\includegraphics[trim=4cm 0cm 4cm 0cm, width=0.8\textwidth, height=0.4\textheight]{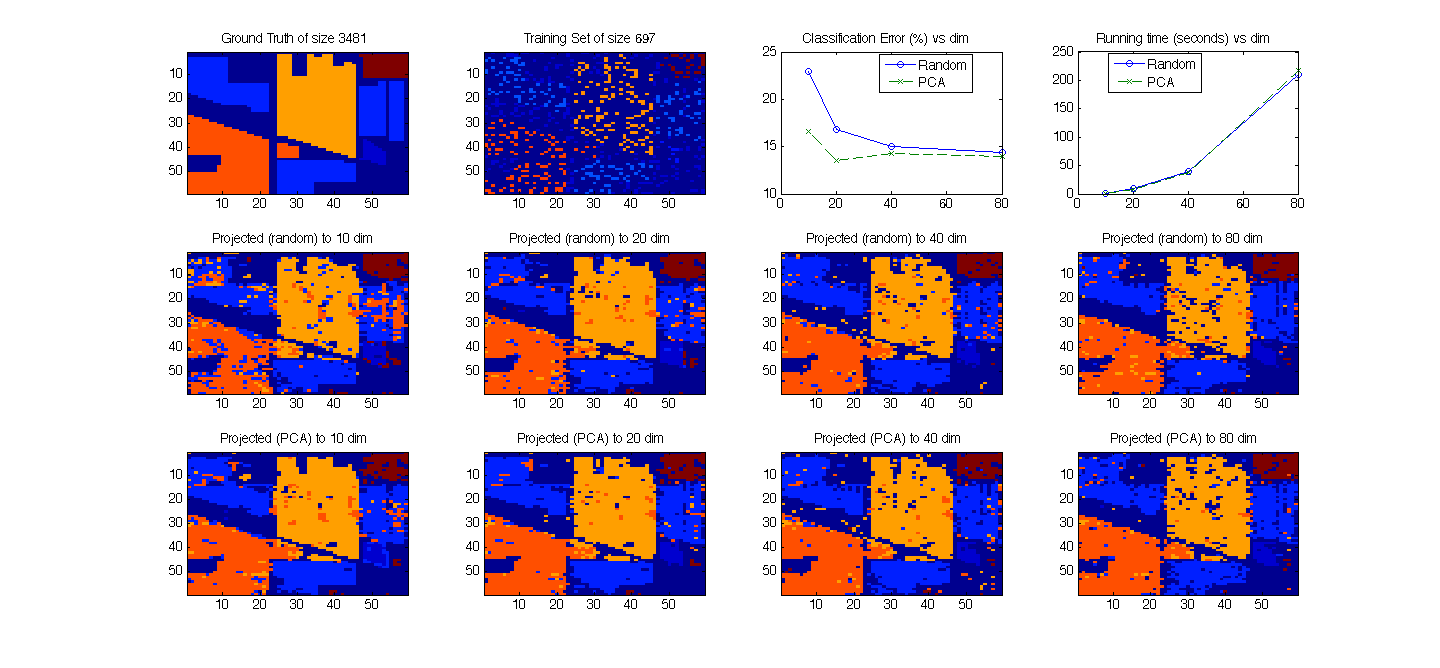}
\caption{
\label{figureIndianPines2_trainningratio0.5and0.2}
{\small 
The performance of classification by multinomial logistic regression after projecting onto subspaces of various dimensions $M$.
Depicted are two particular patches of the entire Indian Pines data set---the top uses a patch of $225$ pixels, while the bottom uses a patch of $3481$ pixels.
In each case, the first two plots in the first row depict the ground truth labels in the patch, as well as the random training set we selected.
The third plot compares, for different values of projection dimension $M$, the classification error incurred with random projection and with principal component analysis. 
The fourth plot shows the runtime (in seconds) for different values of $M$.
The second and third rows depict the classification outcomes when using random projection and PCA, respectively.
One can see that dimensionality reduction not only speeds up the algorithm, but also improves the classification performance by discouraging overfitting.
}\normalsize}
\end{center}
\end{figure}

It is also interesting how similar random projection and PCA perform.
Note that the PCA method has an unfair advantage since it is data-adaptive, meaning that it uses the training data to select the projection, and in practical applications for which the sensing process is expensive, one might be interested in projecting in a non-adaptive way, thereby allowing for less sensing.
Our simulations suggest that the expense is unnecessary, as a random projection will provide almost identical performance.
As indicated in the previous subsection, random projection is also better understood as a means to maintain linear separability, and so there seems to be little benefit in choosing PCA over random projection (at least for this sort of classification task).

\section{Future work}

One of the main points of this paper is that random projections can maintain separation between sufficiently separated sets, and this is useful for classification in the projected domain.
Given the mindset of compressed sensing, it is impressive that the sets need not be low-dimensional to enjoy separation in the projected domain.
What this suggests is a more general notion of simplicity that is at play, of which low-dimensionality and sufficient separation are mere instances.
Obviously, understanding this general notion is a worthy subject of future work.

From a more applied perspective, it would be worth investigating alternative notions of distortion.
Indeed, linear separability is the best-case scenario for classification, but it is not at all necessary.
After identifying any worthy notion of distortion, one might study how much distortion is incurred by random projection, and hopefully some of the ideas contained in this paper will help.

One of our main results (Theorem~\ref{theorem.two ellipsoids}) provides a sufficient number of rows for a random projection to maintain separation between ellipsoids.
However, as illustrated in Figure~\ref{figure_ellipsoids_phasetransition}, this bound is far from optimal.
Considering this case of two ellipsoids is somewhat representative of the more general case of two convex sets (as we identified using Theorem~\ref{theorem.concentration of volume}), improvements to Theorem~\ref{theorem.two ellipsoids} would be rather interesting.
In particular, it would be nice to characterize the phase transition in terms of the ellipsoids' parameters, as we already have in the case of two balls.

Finally, the random projections we consider here all have iid $\mathcal{N}(0,1)$ entries, but real-world sensing systems may not enjoy this sort of flexibility.
As such, it would be interesting to extend the results of this paper to more general classes of random projections, in particular, random projections which can be implemented with a hyperspectral imager (say).

\section{Appendix: Proofs}

\subsection{Proof of Gordon's Escape Through a Mesh Theorem}

This proof is chiefly based on the following result, which appears as Corollary~1.2 in~\cite{Gordon:88}:

\begin{gordons-comparison-theorem}
Let $S$ be a closed subset of $\mathbb{S}^{n-1}$.
Draw an $M\times N$ matrix $P$ with iid $\mathcal{N}(0,1)$ entries.
Then
\begin{equation*}
\mathbb{E}\bigg[\min_{x\in S}\|Px\|_2\bigg]
\geq\lambda_M-w(S),
\end{equation*}
where $\lambda_M:=\mathbb{E}\|g\|_2$, and $g$ is a random $M$-dimensional vector with iid $\mathcal{N}(0,1)$ entries.
\end{gordons-comparison-theorem}

To prove the escape theorem, consider the function
\begin{equation*}
f_S\colon P\mapsto\min_{x\in S}\|Px\|_2.
\end{equation*}
Gordon's Comparison Theorem gives that $\mathbb{E}[f_S]\geq\lambda_M-w(S)$, and so
\begin{align}
\operatorname{Pr}\Big(Y\cap S=\emptyset\Big)
\nonumber
&=\operatorname{Pr}\Big(\min_{x\in S}\|Px\|_2>0\Big)\\
\nonumber
&=\operatorname{Pr}\Big(\min_{x\in S}\|Px\|_2>\big(\lambda_M-w(S)\big)-\big(\lambda_M-w(S)\big)\Big)\\
\label{eq.escape to bound}
&\geq\operatorname{Pr}\Big(\min_{x\in S}\|Px\|_2>\mathbb{E}[f_S]-\big(\lambda_M-w(S)\big)\Big).
\end{align}
Next, we note that $f_S$ is Lipschitz with respect to the Frobenius norm with constant $1$, and so we can appeal to~(1.6) of~\cite{LedouxT:91} to get
\begin{equation}
\label{eq.lipschitz bound}
\operatorname{Pr}\Big(f_S(P)>\mathbb{E}[f_S]-t\Big)\geq 1-e^{-t^2/2}
\qquad
\forall t>0.
\end{equation}
Taking $t=\lambda_M-w(S)$ and applying \eqref{eq.lipschitz bound} to \eqref{eq.escape to bound} then gives the result.

\subsection{Proof of Lemma~\ref{lemma.circcone lower bound}}

Let $\mathcal{C}_-$ denote the cone generated by the Minkowski difference $S_1-S_2$.
We will show $\mathcal{C}_-=\operatorname{Circ}(\alpha)$ by verifying both containments.

We begin by finding the smallest $\alpha\in[0,\pi/2]$ for which $\mathcal{C}_-\subseteq\operatorname{Circ}(\alpha)$.
By the definition of $\operatorname{Circ}(\alpha)$, this containment is equivalent to
\begin{equation}
\label{eq.alpha requirement}
\cos\alpha
\leq\inf_{z\in \mathcal{C}_-}\frac{\langle z,c_1-c_2\rangle}{\|z\|\|c_1-c_2\|}
=\min_{z\in S_1-S_2}\frac{\langle z,c_1-c_2\rangle}{\|z\|\|c_1-c_2\|}.
\end{equation}
To find the smallest such $\alpha$, we solve this optimization problem.
Taking $d:=c_1-c_2$, then $S_1-S_2=(r_1+r_2)\mathcal{B}+d$, and so we seek to
\begin{equation*}
\mbox{minimize}
\quad
f(y)=\frac{\langle y+d,d\rangle}{\|y+d\|\|d\|}
\quad
\mbox{subject to}
\quad
\|y\|\leq r_1+r_2.
\end{equation*}
Quickly note that the objective function is well defined over the feasibility region due to the assumption $r_1+r_2<\|d\|$.
We first claim that $f(y)$ is minimized on the boundary, i.e., where $\|y\|=r_1+r_2$.
To see this, suppose $\|y\|<r_1+r_2$, and letting $P_{d^\perp}$ denote the orthogonal projection onto the orthogonal complement of the span of $d$, take $t>0$ such that $\|y+tP_{d^\perp}y\|=r_1+r_2$.
Then $y+tP_{d^\perp}y$ lies on the boundary and 
\begin{equation*}
f(y+tP_{d^\perp}y)
=\frac{\langle y+tP_{d^\perp}y+d,d\rangle}{\|y+tP_{d^\perp}y+d\|\|d\|}
=\frac{\langle y+d,d\rangle}{\|y+tP_{d^\perp}y+d\|\|d\|}
<\frac{\langle y+d,d\rangle}{\|y+d\|\|d\|}
=f(y).
\end{equation*}
As such, it suffices to minimize subject to $\|y\|=r_1+r_2$.
At this point, the theory of Lagrange multipliers can be applied since the equality constraint $g(y):=\|y\|^2=(r_1+r_2)^2$ is a level set of a function whose gradient $\nabla g(y)=2y$ does not vanish on the level set.
Thus, the minimizers of $f$ with $g(y)=(r_1+r_2)^2$ satisfy $\nabla f(y)=-\lambda\nabla g(y)=-2\lambda y$ for some Lagrange multiplier $\lambda\in\mathbb{R}$.

To continue, we calculate $\nabla f(y)$.
It is actually easier to calculate the gradient of $h(u):=\langle u,d\rangle/\|u\|\|d\|$:
\begin{equation*}
\nabla h(u)
=\frac{1}{\|u\|^2}\bigg(d-\bigg\langle\frac{u}{\|u\|},d\bigg\rangle\frac{u}{\|u\|}\bigg).
\end{equation*}
Note that $\nabla h(u)=0$ only if $u$ is a nontrivial multiple of $d$, i.e., only if $u$ maximizes $h$ (by Cauchy--Schwarz).
Also, it is easy to verify that $\langle u,\nabla h(u)\rangle=0$.
Overall, changing variables $u\leftarrow y+d$ gives that any minimizer $y^\natural$ of $f$ subject to $\|y\|=r_1+r_2$ satisfies
\begin{align}
\label{eq.tight a}
\nabla f(y^\natural)
&=-2\lambda y^\natural\qquad \mbox{for some }\lambda\in\mathbb{R},\\
\label{eq.tight b}
\nabla f(y^\natural)
&\neq 0,\\
\label{eq.tight c}
\langle y^\natural+d,\nabla f(y^\natural)\rangle
&=0.
\end{align}
At this point, \eqref{eq.tight a} and \eqref{eq.tight b} together imply that $\nabla f(y^\natural)$ is a nontrivial multiple of $y^\natural$, and so combining with \eqref{eq.tight c} gives
\begin{equation*}
\langle y^\natural+d,y^\natural\rangle=0.
\end{equation*}
As such, $0$, $d$ and $y^\natural+d$ form vertices of a right triangle with hypotenuse $\|d\|$, and the smallest $\alpha$ satisfying \eqref{eq.alpha requirement} is the angle between $d$ and $y^\natural+d$.
Thus, $\sin\alpha=\|y^\natural\|/\|d\|=(r_1+r_2)/\|c_1-c_2\|$.

It remains to prove the reverse containment, $\operatorname{Circ}(\alpha)\subseteq\mathcal{C}_-$, for this particular choice of $\alpha$.
Define
\begin{equation*}
G
:=\{z:\langle z,d\rangle=\|z\|\|d\|\cos\alpha,~\|z\|=\|y^\natural+d\|\}.
\end{equation*}
Then $\operatorname{Circ}(\alpha)$ is the cone generated by $G$, and so it suffices to show that $G\subseteq S_1-S_2=(r_1+r_2)\mathcal{B}+d$.
To this end, pick any $z\in G$, and consider the triangle with vertices $0$, $d$ and $z$.
By definition, the angle between $d$ and $z$ is $\alpha$, and the side $z$ has length $\|y^\natural+d\|$.
As such, by the side-angle-side postulate, this triangle is congruent to the triangle with vertices at $0$, $d$ and $y^\natural+d$.
This implies that the side between $z$ and $d$ has length $\|z-d\|=\|y^\natural\|=r_1+r_2$, and so $z=(z-d)+d\in(r_1+r_2)\mathcal{B}+d$, as desired.

\subsection{Proof of Theorem~\ref{theorem.two ellipsoids}}

This proof makes use of the following lemma:

\begin{lemma}
\label{lemma.applying Jensens twice}
Take an $n\times n$ matrix $A$ and let $g$ have iid $\mathcal{N}(0,1)$ entries.
Then
\begin{equation*}
\sqrt{\frac2{\pi}}\|A\|_F \leq \mathbb{E}\|Ag\|_2 \leq \|A\|_F.
\end{equation*}
\end{lemma}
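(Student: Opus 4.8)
The plan is to establish the two inequalities separately, deriving each from an application of Jensen's inequality (which is what the name of the lemma refers to). For the upper bound I would first pass to the second moment, which can be computed exactly: since $\mathbb{E}\|Ag\|_2^2=\mathbb{E}[g^\top A^\top A g]=\operatorname{Tr}(A^\top A)=\|A\|_F^2$, concavity of the square root gives
\[
\mathbb{E}\|Ag\|_2=\mathbb{E}\big[(\|Ag\|_2^2)^{1/2}\big]\leq\big(\mathbb{E}\|Ag\|_2^2\big)^{1/2}=\|A\|_F.
\]

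For the lower bound, the first step is to reduce to a diagonal matrix. Writing the singular value decomposition $A=U\Sigma V^\top$ with $\Sigma=\operatorname{diag}(\sigma_1,\ldots,\sigma_n)$, rotational invariance of the standard Gaussian vector gives $V^\top g\overset{d}{=}g$, so $\|Ag\|_2$ has the same distribution as $\|\Sigma g\|_2=\big(\sum_{i=1}^n\sigma_i^2 g_i^2\big)^{1/2}$, while $\|A\|_F^2=\sum_{i=1}^n\sigma_i^2$. The key point is that a naive application of Jensen to the concave map $x\mapsto\sqrt{x}$ here would point the wrong way, so instead I would reweight via Cauchy--Schwarz:
\[
\sum_{i=1}^n\sigma_i^2|g_i|=\sum_{i=1}^n\sigma_i\cdot\sigma_i|g_i|\leq\Big(\sum_{i=1}^n\sigma_i^2\Big)^{1/2}\Big(\sum_{i=1}^n\sigma_i^2 g_i^2\Big)^{1/2}=\|A\|_F\,\|\Sigma g\|_2,
\]
equivalently $\|\Sigma g\|_2\geq\|A\|_F^{-1}\sum_{i=1}^n\sigma_i^2|g_i|$ (this is precisely Jensen's inequality for $\sqrt{\cdot}$ applied with the probability weights $\sigma_i^2/\|A\|_F^2$, the second use of Jensen). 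Taking expectations and using $\mathbb{E}|g_i|=\sqrt{2/\pi}$ then yields $\mathbb{E}\|Ag\|_2\geq\|A\|_F^{-1}\sqrt{2/\pi}\sum_{i=1}^n\sigma_i^2=\sqrt{2/\pi}\,\|A\|_F$.

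I do not anticipate a genuine obstacle here; the one thing requiring care is that the sharp constant $\sqrt{2/\pi}$ cannot survive a direct Jensen estimate on $\mathbb{E}\big(\sum_i\sigma_i^2 g_i^2\big)^{1/2}$, and it is exactly the combination of the rotational-invariance reduction to the diagonal case with the Cauchy--Schwarz reweighting that retains that constant.
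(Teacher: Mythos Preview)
Your proof is correct. The upper bound is exactly the paper's argument (Jensen via concavity of the square root; the paper routes through the SVD first, but the trace identity you use makes that step unnecessary). For the lower bound both you and the paper first reduce to the diagonal case via the SVD and rotational invariance, but then apply Jensen in different ``directions'': the paper treats $x\mapsto\|x\|_2$ as a convex function on $\mathbb{R}^n$ and applies Jensen over the randomness of $g$ to obtain $\mathbb{E}\sqrt{\sum_i\sigma_i^2g_i^2}\geq\sqrt{\sum_i(\mathbb{E}|\sigma_ig_i|)^2}=\sqrt{2/\pi}\,\|A\|_F$, whereas you apply Cauchy--Schwarz (equivalently, Jensen for the concave scalar map $\sqrt{\cdot}$ with weights $\sigma_i^2/\|A\|_F^2$) pointwise in $g$ and only then take expectations. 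Both routes land on the same constant; the paper's version is marginally cleaner in that it avoids the division by $\|A\|_F$ (and hence the trivial $A=0$ edge case), while yours has the minor advantage of producing a deterministic pointwise inequality $\|\Sigma g\|_2\geq\|A\|_F^{-1}\sum_i\sigma_i^2|g_i|$ before averaging.
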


\begin{proof}
Let $A=UDV$ be the singular value decomposition of $A$.
Since the Gaussian is isotropic, $\mathbb{E}\|Ag\|_2=\mathbb{E}\|Dg\|_2$, and since the function $x\mapsto x^2$ is convex, Jensen's inequality gives
\begin{equation*}
\mathbb{E}\|Dg\|_2
\leq\sqrt{ \mathbb{E}\|Dg\|_2^2}
=\sqrt{\sum_{i=1}^n D_{ii}^2 \mathbb{E}g_i^2}
=\|D\|_F
=\|A\|_F.
\end{equation*}
Similarly, since $x\mapsto \|x\|_2$ is convex, we can also use Jensen's inequality to get
\begin{equation*}
\mathbb{E}\|Dg\|_2
=\mathbb{E}\sqrt{\sum_{i=1}^n D_{ii}^2 g_i^2}
\geq\sqrt{\sum_{i=1}^n \left(\mathbb{E}|D_{ii} g_i|\right)^2}
=\mathbb{E}|g_1|\sqrt{\sum_{i=1}^n D_{ii}^2}
=\sqrt{\frac2{\pi}}\|A\|_F,
\end{equation*}
which completes the proof.
\end{proof}

To prove Theorem~\ref{theorem.two ellipsoids}, let $\mathcal{C}_-$ denote the cone generated by the Minkowski difference $S_1-S_2$.
We will exploit Proposition~\ref{proposition 3.6}, which gives the following estimate in terms of the polar cone $\mathcal{C}_-^*:=\{w:\langle w,z\rangle\leq 0 ~ \forall z\in \mathcal{C}_-\}$:
\begin{equation*}
w_\cap
\leq\mathbb{E}_g\Big[\|g-\Pi_{\mathcal{C}_-^*}(g)\|_2\Big],
\end{equation*}
where $g$ has iid $\mathcal{N}(0,1)$ entries and $\Pi_{\mathcal{C}_-^*}$ denotes the Euclidean projection onto $\mathcal{C}_-^*$.
Instead of directly computing the distance between $g$ and its projection onto $\mathcal{C}_-^*$, we will construct a mapping $\widetilde{\Pi}$ which sends $g$ to some member of $\mathcal{C}_-^*$, but for which distances are easier compute; indeed $\|g-\widetilde{\Pi}(g)\|_2$ will be an upper bound on $\|g-\Pi_{\mathcal{C}_-^*}(g)\|_2$.
Consider the polar decomposition $c_2-c_1=\zeta e$, where $\zeta>0$.
Then we can decompose $g=g_1e+g_2$, and we define $\widetilde{\Pi}(g)$ to be the point in $\mathcal{C}_-^*$ of the form $\alpha e+g_2$ which is closest to $g$.
With this definition, we have 
\begin{equation*}
\|g-\Pi_{\mathcal{C}_-^*}(g)\|_2
\leq\|g-\widetilde{\Pi}(g)\|_2
=\min |g_1-\alpha|
\quad
\mbox{s.t.}
\quad
\alpha e+g_2\in \mathcal{C}_-^*.
\end{equation*}
To simplify this constraint, we find a convenient representation of the polar cone:
\begin{align*}
\mathcal{C}_-^*
&=\{w:\langle w,z\rangle\leq 0 ~ \forall z\in \mathcal{C}_-\}\\
&=\{w:\langle w,u-v\rangle\leq 0 ~ \forall u\in S_1, v\in S_2\}\\
&=\{w:\langle w,c_2-c_1\rangle\geq\langle w,A_1x\rangle-\langle w,A_2y\rangle ~ \forall x,y\in \mathcal{B}\}\\
&=\Big\{w:\langle w,c_2-c_1\rangle\geq\max_{x\in \mathcal{B}}\langle w,A_1x\rangle+\max_{y\in \mathcal{B}}\langle w,-A_2y\rangle\Big\}\\
&=\Big\{w:\langle w,c_2-c_1\rangle\geq\max_{x\in \mathcal{B}}\langle A_1^\top w,x\rangle+\max_{y\in \mathcal{B}}\langle -A_2^\top w,y\rangle\Big\}\\
&=\{w:\langle w,c_2-c_1\rangle\geq \|A_1w\|_2+\|A_2w\|_2\},
\end{align*}
where the last step uses the fact that each $A_i$ is symmetric.
The constraint $\alpha e+g_2\in \mathcal{C}_-^*$ is then equivalent to
\begin{equation*}
\alpha\zeta
\geq\|A_1(\alpha e+g_2)\|_2+\|A_2(\alpha e+g_2)\|_2.
\end{equation*}
At this point, we focus on the case in which the projection $e^\top S_1$ is disjoint from $e^\top S_2$.
In this case, we have the following strict inequality:
\begin{equation*}
\max_{x\in \mathcal{B}}\langle c_1+A_1x,e\rangle
=\max_{u\in S_1}\langle u,e\rangle
<\min_{v\in S_2}\langle v,e\rangle
=\min_{y\in \mathcal{B}}\langle c_2+A_2y,e\rangle,
\end{equation*}
and rearranging then gives
\begin{align*}
\zeta
=\langle c_2-c_1,e\rangle
&>\max_{x\in \mathcal{B}}\langle A_1x,e\rangle+\max_{y\in \mathcal{B}}\langle -A_2x,e\rangle\\
&=\max_{x\in \mathcal{B}}\langle x,A_1^\top e\rangle+\max_{y\in \mathcal{B}}\langle x,-A_2^\top e\rangle
=\|A_1e\|_2+\|A_2e\|_2.
\end{align*}
As such, taking
\begin{equation}
\label{eq.defn of alpha star}
\alpha
\geq \alpha^*
:=\frac{\|A_1g_2\|_2+\|A_2g_2\|_2}{\zeta-\big(\|A_1e\|_2+\|A_2e\|_2\big)}
\end{equation}
produces a point $\alpha e+g_2 \in \mathcal{C}_-^*$, considering
\begin{equation*}
\alpha\zeta
\geq \alpha\big(\|A_1e\|_2+\|A_2e\|_2\big)+\|A_1g_2\|_2+\|A_2g_2\|_2
\geq \|A_1(\alpha e+g_2)\|_2+\|A_2(\alpha e+g_2)\|_2,
\end{equation*}
where the last step follows from the triangle inequality.
Note that if $g_1\geq \alpha^*$, then we can take $\alpha=g_1$ to get $\|g-\widetilde{\Pi}(g)\|_2=0$.
Otherwise, $\|g-\widetilde{\Pi}(g)\|_2\leq|g_1-\alpha^*|=\alpha^*-g_1$.
Overall, we have
\begin{equation*}
\|g-\Pi_{\mathcal{C}_-^*}(g)\|_2
\leq\|g-\widetilde{\Pi}(g)\|_2
\leq(\alpha^*-g_1)_+.
\end{equation*}
By the monotonicity of expectation, we then have
\begin{equation}
\label{eq.bound on w cap 1}
w_\cap
\leq\mathbb{E}_g\Big[\|g-\Pi_{\mathcal{C}_-^*}(g)\|_2\Big]
\leq\mathbb{E}_g(\alpha^*-g_1)_+
=\mathbb{E}_{g_2}\Big[\mathbb{E}_{g_1}\Big[(\alpha^*-g_1)_+\Big|g_2\Big]\Big].
\end{equation}
To estimate the right-hand side, we first have
\begin{equation}
\label{eq.bound on w cap 2}
\mathbb{E}_{g_1}\Big[(\alpha^*-g_1)_+\Big|g_2\Big]
=\int_{-\infty}^\infty(\alpha^*-z)_+d\Phi(z)
=\alpha^*\Phi(\alpha^*)+\frac{1}{\sqrt{2\pi}}e^{-(\alpha^*)^2/2},
\end{equation}
which lies between $\alpha^*/2$ and $\alpha^*+1/\sqrt{2\pi}$ since $\alpha\geq0$.

Let $P_{e^\perp}$ denote the $n\times n$ orthogonal projection onto the orthogonal complement of the span of $e$.
Appealing to Lemma~\ref{lemma.applying Jensens twice} with $A:=A_iP_{e^\perp}$ then gives
\begin{equation*}
\mathbb{E}\|A_ig_2\|_2
=\mathbb{E}\|A_iP_{e^\perp}g\|_2
\leq\|A_iP_{e^\perp}\|_F
\leq\|A_i\|_F,
\end{equation*}
where the last inequality follows from the fact that each row of $A_iP_{e^\perp}$ is a projection of the corresponding row in $A_i$, and therefore has a smaller $2$-norm.
Considering \eqref{eq.defn of alpha star}, this implies
\begin{equation*}
\mathbb{E}_{g_2}\alpha^*
\leq \frac{\|A_1\|_F+\|A_2\|_F}{\zeta-\big(\|A_1e\|_2+\|A_2e\|_2\big)},
\end{equation*}
which combined with \eqref{eq.bound on w cap 1} and \eqref{eq.bound on w cap 2} then gives
\begin{equation*}
w_\cap
\leq\mathbb{E}_{g_2}\Big[\mathbb{E}_{g_1}\Big[(\alpha^*-g_1)_+\Big|g_2\Big]\Big]
\leq\frac{\|A_1\|_F+\|A_2\|_F}{\zeta-\big(\|A_1e\|_2+\|A_2e\|_2\big)}+\frac{1}{\sqrt{2\pi}}.
\end{equation*}

\subsection*{Acknowledgments}
The authors thank Matthew Fickus and Katya Scheinberg for insightful discussions.
A.\ S.\ Bandeira was supported by AFOSR award FA9550-12-1-0317, D.\ G.\ Mixon was supported by NSF award DMS-1321779, and B.\ Recht was supported by ONR award N00014-11-1-0723 and NSF awards CCF-1139953 and CCF-11482.
The views expressed in this article are those of the authors and do not reflect the official policy or position of the United States Air Force, Department of Defense, or the U.S.\ Government.

\end{document}